\documentclass{article} 
\usepackage[preprint]{colm2026_conference}
\usepackage{microtype}
\usepackage{hyperref}
\usepackage{url}
\usepackage{booktabs}

\usepackage{amsmath}
\usepackage{booktabs}
\usepackage{enumitem}
\usepackage{hyperref}
\usepackage[capitalize, noabbrev]{cleveref}
\usepackage{subcaption}

\usepackage{algorithm}
\usepackage{algpseudocode}
\usepackage{pifont}        

\usepackage{graphicx}
\usepackage{amsmath}
\usepackage{cleveref}
\usepackage{multirow}
\usepackage{linguex}
\usepackage{sidecap}
\usepackage[utf8]{inputenc}
\usepackage[T1]{fontenc}

\usepackage{amssymb}
\usepackage{amsthm}
\newtheorem{remark}{Remark}

\usepackage{amsthm, amsmath, amssymb}
\newtheorem{proposition}{Proposition}

\newtheorem{lemma}{Lemma} 

\usepackage{lineno}

\definecolor{darkblue}{rgb}{0, 0, 0.5}
\hypersetup{colorlinks=true, citecolor=darkblue, linkcolor=darkblue, urlcolor=darkblue}

\usepackage[most]{tcolorbox}
\usepackage{enumitem}
\usepackage{alltt}
\newtcolorbox{paperprompt}[1]{
  enhanced,
  breakable,
  colback=black!2,
  colframe=black!18,
  boxrule=0.35pt,
  arc=1.2mm,
  left=1.4mm,
  right=1.4mm,
  top=1.0mm,
  bottom=1.0mm,
  title={#1},
  fonttitle=\bfseries,
  colbacktitle=black!6,
  coltitle=black,
}

\newtcolorbox{promptschema}{
  enhanced,
  breakable,
  colback=black!1,
  colframe=black!12,
  boxrule=0.3pt,
  arc=1mm,
  left=1mm,
  right=1mm,
  top=0.6mm,
  bottom=0.6mm,
}

\newcommand{\PromptRole}[1]{%
  \par\smallskip\noindent\textbf{\textsf{#1.}}\ %
}

\newcommand{\PromptField}[1]{%
  \par\smallskip\noindent\textit{#1.}\ %
}

\usepackage{amsmath,amssymb}
\usepackage{tabularx}
\usepackage{enumitem}
\usepackage[most]{tcolorbox}

\newtcolorbox{methodbox}{
  enhanced,
  colback=white,
  colframe=black!75,
  boxrule=0.6pt,
  arc=1mm,
  left=1.2mm,
  right=1.2mm,
  top=1.0mm,
  bottom=1.0mm
}

\title{AdaptFuse: Training-Free Sequential Preference Learning \\
via Externalized Bayesian Inference}

\author{
  \textbf{Fangzhou Lin}$^{a,b,c}$ {\hspace{.1em}}\quad
  \textbf{Peiran Li}$^{a}$ {\hspace{.1em}}\quad
  \textbf{Shuo Xing}$^{a}$ {\hspace{.1em}}\quad
  \textbf{Siyuan Yang}$^{a}$ {\hspace{.1em}}\quad
  \textbf{Qianwen Ge}$^{d}$ {\hspace{.1em}}\quad
  \vspace{.5em}\\
  \textbf{Kazunori Yamada}$^{c}$ {\hspace{.1em}}\quad
  \textbf{Ziming Zhang}$^{b}$ {\hspace{.1em}}\quad
  \textbf{Haichong Zhang}$^{b}$ {\hspace{.1em}}\quad
  \textbf{Zhengzhong Tu}$^{a}$ {\hspace{.1em}}\quad
  \vspace{.5em}\\
  $^a$Texas A\&M University \quad $^b$Worcester Polytechnic Institute \\
  $^c$Tohoku University \quad $^d$Georgia Institute of Technology
}

\begin{document}

\ifcolmsubmission
\linenumbers
\fi

\maketitle

\begin{abstract}
Large language models struggle to accumulate evidence across multiple rounds of user interaction, failing to update their beliefs in a manner consistent with Bayesian inference. Existing solutions require fine-tuning on sensitive user interaction data, limiting their applicability in privacy-conscious settings.
We propose \textsc{AdaptFuse}, a training-free framework that externalizes probabilistic computation entirely from the LLM: a symbolic module maintains a Bayesian posterior over a discrete hypothesis set, while a frozen LLM contributes semantic reasoning via multi-sample Dirichlet aggregation. The two signals are combined through entropy-adaptive fusion, which automatically weights each source by its predictive 
confidence, shifting reliance from the LLM to the symbolic posterior
as evidence accumulates. We evaluate across three domains: flight recommendation, hotel recommendation, and web shopping; on Gemma~2 9B, Llama~3 8B, and Qwen~2.5 7B. \textsc{AdaptFuse} consistently outperforms both prompting baselines and fine-tuned Bayesian Teaching models on all tasks, with accuracy improving monotonically over interaction rounds. These results demonstrate that principled inference-time algorithms can substitute for fine-tuning in personalized recommendation, without storing or training on sensitive user data. \textbf{All the code and materials will be open-sourced.}
\end{abstract}

\section{Introduction}
\label{sec:intro}

Personalizing AI systems to individual user preferences is a fundamental challenge in interactive recommendation, dialogue systems, and e-commerce~\cite{murtaza2022ai,lindgren2025emerging,jiang2025kanmixer,cai2025agentic,lin2026position,jiang2025timepre,li2026traversal,huang2026towards}. A natural formulation is \emph{sequential preference learning}: over a series of interactions, an agent observes which items a user chooses, updates its belief about the user's latent preferences, and uses that belief to make progressively better recommendations~\cite{gao2024aligning,milani2025aligning,li2026bibagent}. The central difficulty is that user preferences are never directly 
observed; they must be inferred from a stream of noisy, indirect signals.

Large language models (LLMs) are appealing agents for this task: they possess broad semantic knowledge about items and can reason over natural language descriptions~\cite{cheng2024exploring,dai2025language,huang2025recommender,li2026let}. However, \citet{qiu2026bayesian} recently showed that off-the-shelf LLMs fundamentally fail at sequential belief updating: 
They often plateau after a single round of feedback, failing to accumulate evidence across interactions in the way an optimal Bayesian agent would. Their proposed solution, Bayesian Teaching, achieves strong performance by
fine-tuning LLMs on demonstrations from an ideal Bayesian assistant.
Yet fine-tuning introduces a practical tension: user interaction logs contain sensitive personal preference data, and training on such data raises significant privacy concerns~\cite{gan2024navigating,yao2024survey,he2025emerged}.
Moreover, fine-tuning may need to be repeated whenever the deployment domain changes, a substantial cost barrier for real-world personalization systems. Furthermore, deploying large proprietary models (e.g., GPT series~\cite{achiam2023gpt,hurst2024gpt}) via  API introduces additional privacy risks, as user interaction data must be transmitted to external servers; lightweight open-source models that can be deployed locally offer a more privacy-preserving alternative.

In this work, we ask: \emph{can we match or exceed the performance of
fine-tuned Bayesian reasoning in LLMs without modifying any model weights?} We answer affirmatively and propose \textsc{AdaptFuse}, a training-free framework that requires no gradient 
computation, no access to model weights, and no storage of sensitive 
user data beyond what is observed at inference time.

\textsc{AdaptFuse} is built on a key insight: the bottleneck identified by \citet{qiu2026bayesian}, that LLMs struggle to verbalize and execute probabilistic belief updates, which can be circumvented entirely by externalizing the probabilistic computation. \textsc{AdaptFuse} maintains an explicit symbolic posterior over a discrete hypothesis set, updated exactly via Bayes' rule after each interaction. The LLM's role is changed to what it does well: semantic reasoning over
natural language item descriptions, elicited through $N$ diverse samples per round and aggregated via a Dirichlet-Multinomial model~\cite{minka2000estimating,madsen2005modeling,elkan2006clustering,mimno2012topic,holmes2012dirichlet}. The two signals are then combined through an entropy-adaptive fusion mechanism
that automatically weights each source by its predictive confidence,
allocating more influence to the LLM in early rounds when the symbolic posterior is diffuse, and progressively down-weighting it as evidence accumulates and the symbolic signal becomes more confident.

We evaluate \textsc{AdaptFuse} on three sequential preference learning tasks: flight recommendation~\citep{lin2022inferring}, hotel
recommendation~\citep{qiu2026bayesian}, and web
shopping~\citep{yao2022webshop}; across three open-source base models:
Gemma~2 9B~\citep{team2024gemma}, Llama~3 8B~\citep{grattafiori2024llama},
and Qwen~2.5 7B~\citep{hui2024qwen2}.
We find that \textsc{AdaptFuse} consistently outperforms both prompting baselines and fine-tuned Bayesian Teaching models~\cite{qiu2026bayesian} on all tasks and all base models, despite requiring no weight updates, training, or fine-tuning. Accuracy improves monotonically with the number of interaction rounds:
a direct consequence of Bayesian posterior concentration, whereas
prompting baselines plateau after the first round. A single instantiation of \textsc{AdaptFuse} generalizes across all three
domains without domain-specific tuning, and the symbolic module contributes negligible latency, with the per-round cost dominated by the $N$ LLM calls. These results establish that externalizing probabilistic computation is not
merely a theoretical workaround but a practically superior strategy:
\textsc{AdaptFuse} achieves stronger personalization, incurs no training cost, and never requires storing or training on sensitive user interaction data.

The paper is organized as follows.
We describe the \textsc{AdaptFuse} framework and its theoretical properties
in \Cref{sec:methods}.
Experimental setup, baselines, and results are presented in
\Cref{sec:experiments}.
We survey related work in \Cref{sec:related_work} and conclude in \Cref{sec:conclusion}.
\section{Related Work}
\label{sec:related_work}

\textbf{Internal Probabilistic Reasoning in LLMs.}
In-context learning (ICL) capability of large language models (LLMs) enables them to perform tasks by conditioning on provided input examples, eliminating the need for explicit parameter updates~\cite{brown2020language, min2022rethinking, xie2021explanation, dai2022can, kojima2022large, wang2022self,yao2023tree, wei2022chain}. Despite these advances, probabilistic reasoning remain challenging for LLMs, prompting a growing body of work to investigate whether large language models can perform probabilistic reasoning natively~\cite{nafar2023teaching, kadavath2022language,pournemat2025reasoning}. ~\cite{gupta2025enough} suggests that LLMs can approximate Bayesian updating when provided with sufficient coin flip observations, despite a biased prior in simple tasks. More directly related to our setting, ~\cite{qiu2026bayesian} finds that LLMs fail to update their beliefs about user preferences in multi-round interactions in a manner consistent with the Bayesian framework. Our work takes a complementary route: rather than introducing Bayesian reasoning capabilities through fine-tuning, we externalize the probabilistic computation entirely.

\textbf{External Probabilistic Reasoning in LLMs.}
The dominant training-free strategy in the literature keeps LLM weights frozen while offloading probabilistic inference to an external module, BIRD~\citep{feng2024bird} uses the LLM's abductive capabilities to generate intermediate factors, which are then passed to a learnable Bayesian network for deductive probability estimation.
BRANCH~\citep{zheng2025probabilistic} factorizes joint distributions into a Bayesian network whose individual factors are estimated via LLM prompting.
GRAIL~\citep{rahimirad2025bayesian} externalizes belief tracking to a probabilistic graphical model for social deduction games while keeping the LLM frozen. MACLA~\citep{forouzandeh2025learning} maintains a frozen 7B LLM augmented with Beta-posterior-based Bayesian action selection in an external hierarchical memory. In all of these approaches, the LLM functions as a feature extractor or abductive reasoner while the Bayesian computation happens outside it. Our method differs from this paradigm in two respects: the symbolic module is \emph{zero-parameter} (the hypothesis set is read directly from data, with no learning required), and it supports multi-round sequential updating, which BIRD and BRANCH do not address.

\textbf{LLMs as Probabilistic Reasoning Bridges.}
A related cluster of work uses frozen LLMs as \emph{code generators} that produce probabilistic programs, which are then executed by an external inference engine.
~\cite{domke2025large} generates Stan programs~\cite{carpenter2017stan} from informal problem descriptions, runs MCMC inference on each, and combines posteriors via Bayesian model averaging weighted by marginal likelihood.
REFINESTAT~\cite{kanda2025refinestat} automates the full Bayesian modeling workflow, including LLM-generated program pruning via convergence diagnostics.
These approaches are training-free but treat the LLM purely as a compiler; the LLM itself performs no probabilistic inference. Our setting is distinct: the LLM must reason about a user's latent preferences from conversational evidence, a task that does not naturally decompose into a standalone probabilistic program.

\section{Methods}
\label{sec:methods}

We propose \textsc{AdaptFuse}, a training-free inference framework for
sequential preference learning that operates on a frozen base LLM without
any weight updates.
\textsc{AdaptFuse} maintains an explicit symbolic Bayesian posterior over a
discrete hypothesis set and combines it with LLM-derived distributional
estimates via entropy-adaptive fusion.
The core design principle is a strict separation of roles: the symbolic
module handles all probabilistic computation with mathematical guarantees,
while the frozen LLM contributes semantic reasoning about item descriptions.

\paragraph{Notation.}
$[K] \triangleq \{1,\ldots,K\}$ denotes the option index set.
$\mathcal{H} = \{h_m\}_{m=1}^{M}$ is the hypothesis set of $M$ candidate
preference vectors.
$\mathbf{b}_t \in \Delta^{M-1}$ is the belief vector \emph{after}
observing the user's choice at round $t$, with
$b_{t,m} \triangleq P(h_m \mid D_t)$, where $D_t$ collects all observed
(option set, user choice) pairs through round $t$.
We write $\mathbf{b}_0$ for the prior (before any observation).
At round $t$, the agent predicts using $\mathbf{b}_{t-1}$, observes
$y_t$, and updates to $\mathbf{b}_t$.
$\pi \in \Delta^{K-1}$ denotes a probability vector over options.
All logarithms are natural.

\subsection{Task: Sequential Preference Learning}
\label{sec:task}

We consider a general sequential preference learning setting.
An agent interacts with a user over $T$ rounds.
At each round $t$, the agent receives a set of $K$ items described in
natural language, predicts which item the user prefers, and observes the
true choice $y_t \in [K]$.
The user's latent preference is assumed fixed throughout the interaction.
After round $T$, the belief is frozen and used to predict the user's
preferred item from \emph{held-out} sets $\tilde{X}$ not seen during
interaction; held-out accuracy is our primary evaluation metric, as it
measures generalization of the inferred preference rather than adaptation
to specific seen items.

We instantiate this setting across three task domains: flight recommendation~\citep{lin2022inferring}, hotel recommendation~\citep{qiu2026bayesian}, and web shopping~\citep{yao2022webshop};
each with domain-specific item attributes described in \Cref{sec:experiments}.

\paragraph{Feature representation.}
Each item arrives as a natural language string.
A deterministic parser extracts $d$ scalar attributes and normalizes each
to $[0,1]$, yielding $x \in [0,1]^d$.
If any attribute fails to parse, the method falls back to a uniform
prediction over options.
The feature dimensionality $d$ and attribute ranges are domain-specific and
detailed in \Cref{sec:experiments}.

\paragraph{Preference model.}
The user's latent preference is modeled as a linear utility function:
\begin{equation}
    U_h(x) \triangleq h^\top x, \quad h \in \mathbb{R}^d,
    \label{eq:utility}
\end{equation}
where $h$ is the latent preference weight vector.

\paragraph{Hypothesis set.}
Rather than inferring $h$ continuously, we discretize them.
The hypothesis set $\mathcal{H} = \{h_1, \ldots, h_M\}$ is constructed by
extracting all unique ground-truth preference vectors from the training
split of the interaction dataset.
This yields $M$ plausible user types as a fixed read-only lookup table,
requiring no learning.
Discretization is the key step that makes exact Bayesian inference
tractable without MCMC or variational approximation~\cite{fayyad1993multi,zaiser2023exact}.
A brief discussion of this approximation is provided in
\Cref{app:discrete_approx}.

We note that this construction guarantees $h^* \in \mathcal{H}$ for
every evaluation user only when the train/test split ensures full
coverage of all preference types.
In our experimental setup this holds by construction; however, the
framework degrades gracefully when this assumption is violated
(see \Cref{app:discrete_approx}).

\paragraph{Choice likelihood.}
Given option set $X = \{x_i\}_{i=1}^K$ and hypothesis $h$, the
probability of the user choosing option $i$ follows the Luce choice
model~\citep{luce1959individual,luce1977choice}:
\begin{equation}
    P(y = i \mid X, h)
    = \frac{\exp(\beta \cdot U_h(x_i))}
           {\sum_{j=1}^{K} \exp(\beta \cdot U_h(x_j))}
    \label{eq:likelihood}
\end{equation}
where $\beta > 0$ is the inverse temperature (default $\beta = 6.0$).

\subsection{\textsc{AdaptFuse}}
\label{sec:method}

\textsc{AdaptFuse} operates in three stages at each decision point:
\emph{(i)} symbolic Bayesian belief tracking,
\emph{(ii)} LLM multi-sample aggregation, and
\emph{(iii)} entropy-adaptive fusion.
\Cref{alg:adaptfuse} presents the complete procedure.

\subsubsection{Stage 1: Symbolic Bayesian Belief Tracking}
\label{sec:stage1}

\paragraph{Prior and posterior update.}
The belief is initialized uniformly: $b_{0,m} = 1/M$ for all $m$.
At round $t$, after observing choice $y_t$, the belief is updated as:
\begin{equation}
    b_{t,m} \;\propto\; b_{t-1,m} \cdot \ell_{t,m},
    \qquad
    \ell_{t,m} \triangleq \max\!\bigl(\varepsilon,\;
    P(y_t \mid X_t, h_m)\bigr),
    \label{eq:bayes}
\end{equation}
with floor $\varepsilon = 10^{-8}$ preventing any hypothesis from being
irreversibly eliminated by a single atypical observation, and the vector
renormalized after each update.
Unrolling over $t$ rounds under the uniform prior gives the closed-form
posterior:
\begin{equation}
    b_{t,m}
    = \frac{\prod_{\tau=1}^{t} \ell_{\tau,m}}
           {\sum_{m'=1}^{M} \prod_{\tau=1}^{t} \ell_{\tau,m'}}.
    \label{eq:posterior_closed}
\end{equation}

\paragraph{Symbolic predictive distribution.}
At round $t$, the symbolic prediction over options is computed from the
\emph{pre-update} belief $\mathbf{b}_{t-1}$ as the posterior-weighted
mixture:
\begin{equation}
    \pi^{\mathrm{sym}}_i
    \;\triangleq\; \sum_{m=1}^{M} b_{t-1,m} \cdot P(i \mid X_t, h_m),
    \label{eq:predictive}
\end{equation}
an exact marginalization requiring no sampling.

\subsubsection{Stage 2: LLM Multi-Sample Aggregation}
\label{sec:stage2}

The frozen LLM is queried $N = 5$ times per decision point.
To encourage output diversity, each sample $s \in [N]$ uses a temperature
drawn cyclically from $\{0.2, 0.7, 1.0\}$ and a reasoning hint from a fixed
pool (see \Cref{sec:prompt} for the full prompt template).
Each valid sample $s$ yields a predicted option $\widehat{y}_s \in [K]$
and a confidence score $c_s \in [0,1]$; samples that fail to parse are
silently skipped.

\paragraph{Dirichlet aggregation.}
Samples are aggregated via a Dirichlet-Multinomial model~\cite{madsen2005modeling,elkan2006clustering}.
Starting from $\alpha_i = \alpha_0 = 1.0$, each valid sample $s$ contributes
confidence-weighted pseudo-counts:
\begin{equation}
    \alpha_i \;\leftarrow\; \alpha_i + w_{s,i},
    \qquad
    w_{s,i} \triangleq
    \begin{cases}
        c_s & i = \widehat{y}_s, \\[3pt]
        (1 - c_s)/(K-1) & \text{otherwise.}
    \end{cases}
    \label{eq:dirichlet_update}
\end{equation}
The aggregated LLM distribution is the Dirichlet posterior mean:
\begin{equation}
    \pi^{\mathrm{llm}}_{\mathrm{raw},t}
    \;=\; \boldsymbol{\alpha} \,/\, \|\boldsymbol{\alpha}\|_1.
    \label{eq:dirichlet_mean}
\end{equation}

\begin{lemma}[Dirichlet Aggregation as Posterior Mean]
\label{lemma:dirichlet}
Let $\theta \sim \mathrm{Dir}(\alpha_0 \mathbf{1}_K)$ be a prior over the
categorical parameter. Treating the confidence-weighted vectors
$\{w_{s,i}\}$ as fractional pseudo-count observations: a standard
extension of Dirichlet-Multinomial conjugacy~\citep{elkan2006clustering,minka2000estimating}; the
posterior is: $\theta \mid \{w_{s,i}\} \sim \mathrm{Dir}(\boldsymbol{\alpha})$, 
and
$\pi^{\mathrm{llm}}_{\mathrm{raw},t} = \mathbb{E}[\theta \mid \{w_{s,i}\}]$.
\end{lemma}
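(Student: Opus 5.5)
The plan is to make precise what ``treating the confidence-weighted vectors as fractional pseudo-count observations'' means, and then to verify conjugacy directly from the densities.

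\textbf{Step 1: fractional likelihood.} For each valid sample $s$, define the fractional-count likelihood $L_s(\theta) \triangleq \prod_{i=1}^K \theta_i^{w_{s,i}}$. This is the weighted (power) likelihood that reduces to an ordinary categorical observation when $w_{s,\cdot}$ is a one-hot vector, which happens when $c_s=1$. By \eqref{eq:dirichlet_update} each $w_{s,\cdot}$ is nonnegative and satisfies $\sum_i w_{s,i} = c_s + (K-1)\cdot\frac{1-c_s}{K-1} = 1$. So each sample contributes exactly one unit of pseudo-count mass, distributed across categories. Assuming conditional independence of samples given $\theta$, the joint fractional likelihood is $\prod_s L_s(\theta) = \prod_i \theta_i^{\sum_s w_{s,i}}$.

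\textbf{Step 2: conjugate update.} The prior density is $p(\theta) \propto \prod_i \theta_i^{\alpha_0 - 1}$ on the simplex. Multiplying by the likelihood gives $p(\theta \mid \{w_{s,i}\}) \propto \prod_i \theta_i^{\alpha_0 + \sum_s w_{s,i} - 1}$. This is the kernel of $\mathrm{Dir}(\boldsymbol{\alpha})$ with $\alpha_i = \alpha_0 + \sum_s w_{s,i}$, which is exactly the value produced by iterating \eqref{eq:dirichlet_update} from $\alpha_i=\alpha_0$. Normalizability needs $\alpha_i>0$, which holds because $\alpha_0 = 1 > 0$ and $w_{s,i}\ge 0$. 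The normalizing constant is then the multivariate Beta function $B(\boldsymbol{\alpha})$, so the posterior is a proper Dirichlet. Skipped (unparsable) samples contribute no factor, which is consistent with the update simply omitting them.

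\textbf{Step 3: posterior mean.} Use the standard Dirichlet moment $\mathbb{E}[\theta_i] = \alpha_i / \sum_j \alpha_j$. It follows from $\int \theta_i \prod_j \theta_j^{\alpha_j-1}\,d\theta = B(\boldsymbol{\alpha}+\mathbf{e}_i)$ together with $\Gamma(x+1)=x\Gamma(x)$. This yields $\boldsymbol{\alpha}/\|\boldsymbol{\alpha}\|_1$, which matches \eqref{eq:dirichlet_mean}. As a byproduct, $\|\boldsymbol{\alpha}\|_1 = K\alpha_0 + N_{\mathrm{valid}}$.

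\textbf{Main obstacle.} The only nontrivial point is conceptual: non-integer counts do not correspond to a genuine multinomial sampling model. The ``posterior'' is therefore a generalized (tempered or weighted) Bayes update, not a literal one. I would address this by stating the fractional likelihood of Step 1 as the modeling convention the lemma invokes, with the cited extension as justification. I would also note that it coincides with exact conjugacy in the integer or one-hot case. Once that convention is fixed, the remaining computations in Steps 2 and 3 are routine.
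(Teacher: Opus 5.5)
Your proposal is correct and follows the same route as the paper's proof. Like the paper, you check that each $w_{s,\cdot}$ is a nonnegative vector summing to one, use Dirichlet conjugacy to obtain $\alpha_i = \alpha_0 + \sum_s w_{s,i}$, and read off the posterior mean $\alpha_i/\sum_j \alpha_j$. You are more explicit than the paper in two places: you write down the power likelihood $\prod_i \theta_i^{w_{s,i}}$ that makes the conjugacy exact, and you point out that this is a generalized rather than a literal Bayes update, which the paper leaves implicit in the phrase ``fractional count observations.''
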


\begin{proof}
See \Cref{app:proof_dirichlet}.
\end{proof}

\noindent
Unlike majority vote, Dirichlet aggregation weights confident samples more
heavily and retains positive mass on all options via the $\alpha_0$ prior.
When all samples fail to parse, the raw aggregate
$\pi^{\mathrm{llm}}_{\mathrm{raw},t}$ reduces to the uniform distribution.

\paragraph{Temporal momentum smoothing.}
Per-round LLM aggregates exhibit high variance due to small $N$.
\textsc{AdaptFuse} maintains a running memory $\mathrm{mem}$ (initialized to
\texttt{null}) and applies exponential smoothing:
\begin{equation}
    \pi^{\mathrm{llm}}_t =
    \begin{cases}
        \pi^{\mathrm{llm}}_{\mathrm{raw},t}
            & \mathrm{mem} = \texttt{null}, \\[4pt]
        \mathrm{normalize}\!\bigl(
            m \cdot \mathrm{mem}
            + (1{-}m) \cdot \pi^{\mathrm{llm}}_{\mathrm{raw},t}
        \bigr)
            & \text{otherwise,}
    \end{cases}
    \label{eq:ema}
\end{equation}
with momentum coefficient $m = 0.65$, giving an asymptotic effective
sample count $N_{\mathrm{eff}} = N/(1-m) \approx 14$ (the finite-horizon
effective count is smaller in early rounds but approaches this value
within approximately 5 rounds).
The memory is updated to $\pi^{\mathrm{llm}}_t$ only on interaction rounds;
during held-out evaluation the memory is read but not written, preventing
held-out noise from contaminating the running average.

\subsubsection{Stage 3: Entropy-Adaptive Fusion}
\label{sec:stage3}

\textsc{AdaptFuse} weights each source by its predictive confidence,
measured via normalized entropy.
For any $\pi \in \Delta^{K-1}$:
\begin{equation}
    \tilde{H}(\pi)
    \;\triangleq\;
    \frac{-\sum_{i=1}^{K} \pi_i \log \pi_i}{\log K}
    \;\in\; [0,\,1].
    \label{eq:norm_entropy}
\end{equation}
The source weights are:
\begin{equation}
    w^{\mathrm{llm}} \triangleq \max\!\bigl(\varepsilon_w,\;
        1 - \tilde{H}(\pi^{\mathrm{llm}}_t)\bigr),
    \qquad
    w^{\mathrm{sym}} \triangleq \max\!\bigl(\varepsilon_w,\;
        1 - \tilde{H}(\pi^{\mathrm{sym}})\bigr),
    \label{eq:adaptive_weights}
\end{equation}
with $\varepsilon_w = 10^{-3}$.
The fused prediction distribution is:
\begin{equation}
    \pi^*(i) \;\propto\;
    w^{\mathrm{llm}} \cdot \pi^{\mathrm{llm}}_i
    + w^{\mathrm{sym}} \cdot \pi^{\mathrm{sym}}_i,
    \label{eq:adaptive_fusion}
\end{equation}
and the predicted option is $i^* = \operatorname{argmax}_i \pi^*(i)$.

The entropy-adaptive weighting implements an automatic information
schedule.
At any round $t$, the LLM's share of the fused prediction is bounded
above by $1/(1 + w^{\mathrm{sym}}_t)$, which decreases monotonically as
$w^{\mathrm{sym}}_t$ grows
(\Cref{prop:schedule} in \Cref{app:proof_schedule}).
In early rounds, the symbolic posterior is diffuse, both sources carry
comparable weight, and the LLM's semantic understanding of item
descriptions contributes meaningfully.
As evidence accumulates and the symbolic posterior concentrates,
$\tilde{H}(\pi^{\mathrm{sym}})$ decreases, $w^{\mathrm{sym}}$ increases,
and the LLM signal is progressively down-weighted; without any
manually tuned schedule.
We verify this behavior empirically in \Cref{sec:experiments}.

\subsubsection{Sequential Update}
\label{sec:update}

After prediction $i^*_t$ and observation of $y_t$, the belief is updated
via \Cref{eq:bayes} unconditionally at every interaction round, regardless
of prediction correctness.
Held-out evaluations do not trigger a belief update.

\vspace{-0.15cm}
\begin{algorithm}[]
\caption{\textsc{AdaptFuse}}
\label{alg:adaptfuse}
\begin{algorithmic}[1]
\Require $\mathcal{H}$,
         rounds $\{(X_t, y_t)\}_{t=1}^T$,
         held-out sets $\{\tilde{X}_s\}$,
         frozen LLM
\Ensure $\{i^*_t\}_{t=1}^T$, $\{\tilde{i}^*_s\}$

\State $\mathbf{b} \leftarrow \mathbf{1}_M/M$;\enspace
       $\mathrm{mem} \leftarrow \texttt{null}$

\For{$t = 1, \ldots, T$}
    \State \textit{// Stage 1: predict using pre-update belief}
    \State $\pi^{\mathrm{sym}} \leftarrow
        \bigl[\sum_m b_m \cdot P(i \mid X_t, h_m)\bigr]_{i=1}^K$
        \hfill\textit{// Eq.~\eqref{eq:predictive}}

    \State \textit{// Stage 2}
    \State $\{(\widehat{y}_s, c_s)\}_{s=1}^N
        \leftarrow \textsc{SampleLLM}(X_t,\,\mathrm{history})$
    \State $\boldsymbol{\alpha} \leftarrow \alpha_0\mathbf{1}_K$;\enspace
        $\forall$ valid $s$: update $\alpha_i$ via
        Eq.~\eqref{eq:dirichlet_update}
    \State $\pi^{\mathrm{llm}}_{\mathrm{raw}} \leftarrow
        \boldsymbol{\alpha}/\|\boldsymbol{\alpha}\|_1$
    \If{$\mathrm{mem} \neq \texttt{null}$}
        \State $\pi^{\mathrm{llm}} \leftarrow
            \mathrm{normalize}(m\cdot\mathrm{mem}
            + (1{-}m)\cdot\pi^{\mathrm{llm}}_{\mathrm{raw}})$
            \hfill\textit{// Eq.~\eqref{eq:ema}}
    \Else\enspace $\pi^{\mathrm{llm}} \leftarrow
        \pi^{\mathrm{llm}}_{\mathrm{raw}}$
    \EndIf
    \State $\mathrm{mem} \leftarrow \pi^{\mathrm{llm}}$
        \Comment{write only on interaction rounds}

    \State \textit{// Stage 3}
    \State $w^{\mathrm{llm}},\,w^{\mathrm{sym}}$
        via Eq.~\eqref{eq:adaptive_weights}
    \State $\pi^* \leftarrow \mathrm{normalize}(
        w^{\mathrm{llm}}\pi^{\mathrm{llm}}
        + w^{\mathrm{sym}}\pi^{\mathrm{sym}})$
    \State $i^*_t \leftarrow \operatorname{argmax}_i\,\pi^*_i$

    \State \textit{// Update belief after observing $y_t$}
    \State $b_m \leftarrow b_m\cdot\max(\varepsilon, P(y_t|X_t,h_m))$
        $\,\forall m$;\enspace
        $\mathbf{b} \leftarrow \mathbf{b}/\|\mathbf{b}\|_1$
        \hfill\textit{// Eq.~\eqref{eq:bayes}}
\EndFor

\For{each held-out $\tilde{X}_s$}
    \State Compute $\pi^*$ with frozen $\mathbf{b}$,
        read-only $\mathrm{mem}$\enspace (\textit{no update})
    \State $\tilde{i}^*_s \leftarrow \operatorname{argmax}_i\,\pi^*_i$
\EndFor
\end{algorithmic}

\end{algorithm}
\vspace{-0.2cm}
\section{Experiments}
\label{sec:experiments}

\subsection{Tasks}
\label{sec:tasks}

We evaluate \textsc{AdaptFuse} on three sequential preference learning tasks
of increasing difficulty.
Full task descriptions are provided in \Cref{app:task_details}.

\textbf{Flight Recommendation}~\citep{lin2022inferring}.
The agent interacts with a user over $T=5$ rounds; at each round three
flight options are presented, each described by four attributes (departure time, duration, stops, price).

\textbf{Hotel Recommendation}~\citep{qiu2026bayesian}.
Identical interaction protocol to the flight task, with hotels described
by four attributes: distance to downtown, price, rating, and amenities.
This task tests domain generalization while keeping the interaction
structure fixed.

\textbf{Web Shopping}~\citep{yao2022webshop}.
Items are real-world products described by free-form titles and
descriptions; user goals are expressed as natural language queries.
This task is substantially harder than the previous two due to open-ended
item representations and noisier preference signals.

\subsection{Models and Baselines}
\label{sec:baselines}

A key practical advantage of training-free methods is that they can 
be paired with lightweight open-source models deployed entirely 
on-premise, eliminating the need to transmit sensitive user 
interaction data to external API providers. We therefore evaluate on three lightweight open-source base models:
Gemma~2 9B~\citep{team2024gemma},
Llama~3 8B~\citep{grattafiori2024llama}, and
Qwen~2.5 7B~\citep{hui2024qwen2}.
\textsc{AdaptFuse} is applied to each base model without any weight
modification.

We compare against \textit{five} baselines:
\textbf{Direct prompting}: the base LLM prompted with interaction history.
\textbf{CoT}~\citep{wei2022chain}: step-by-step reasoning before
recommendation.
\textbf{Self-consistency}~\citep{wang2022self}: majority vote over $N=5$
CoT samples.
\textbf{Oracle Learning}~\citep{qiu2026bayesian}: LLMs fine-tuned on
ground-truth demonstrations.
\textbf{Bayesian Teaching}~\citep{qiu2026bayesian}: LLMs fine-tuned on
Bayesian teacher demonstrations; the strongest existing baseline,
requiring task-specific training data.

\subsection{Results and Analysis}
\label{sec:results}

\begin{table*}[]
    \centering
    \tiny
    \resizebox{\textwidth}{!}{
    \begin{tabular}{l cc cc cc}
        \toprule
        \multirow{2}{*}{\textbf{Method}} & \multicolumn{2}{c}{\textbf{Gemma}} & \multicolumn{2}{c}{\textbf{Llama}} & \multicolumn{2}{c}{\textbf{Qwen}} \\
        \cmidrule(lr){2-3} \cmidrule(lr){4-5} \cmidrule(lr){6-7}
        & 1st Round & Final & 1st Round & Final & 1st Round & Final \\
        \midrule
        Direct Prompting  & 31.3 & 33.5 & 30.2 & 34.6 & 31.2 & 34.1 \\
        CoT               & 36.2 & 38.4 & 35.1 & 39.1 & 34.5 & 38.6 \\
        Self-consistency   & 40.4 & 44.7 & 38.1 & 43.8 & 37.2 & 41.5 \\
        Oracle Learning    & 48.6 & 59.1 & 45.1 & 58.3 & 40.6 & 51.7 \\
        Bayesian Teaching  & 54.1 & 73.4 & 55.5 & 74.3 & 52.5 & 65.2 \\
        \midrule
        \textbf{\textsc{AdaptFuse} (Ours)} & {53.1} & \textbf{76.2} & {52.1} & \textbf{76.3} & {50.4} & \textbf{67.3} \\
        \bottomrule
    \end{tabular}
    }
    \vspace{-0.1cm}
    \caption{\textbf{Flight recommendation accuracy (\%).}
    Results are reported for both the 1st and final (5th) round.
    Best final-round results are in \textbf{bold}.}
    \label{tab:main_results_flight}
    \vspace{-0.1cm}
\end{table*}

\begin{figure*}[]
\centering
\includegraphics[width=\linewidth]{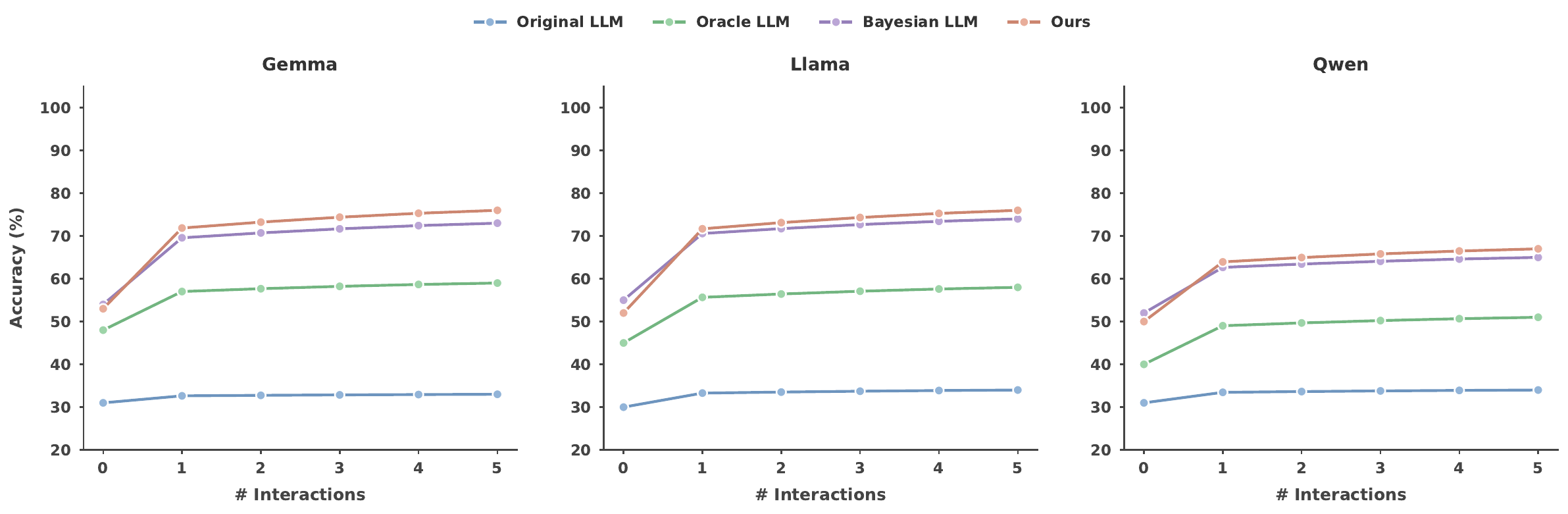}
\vspace{-0.5cm}
\caption{\textbf{Accuracy over interaction rounds on the flight task.}
We show accuracy from the first through the final (fifth) round across
different methods, including original LLMs, models fine-tuned with
Oracle Learning and Bayesian Teaching~\citep{qiu2026bayesian}, and our
training-free \textsc{AdaptFuse}.}
\label{fig:flight_rounds}
\vspace{-0.5cm}
\end{figure*}

\textbf{Flight recommendation (main result).}
\Cref{tab:main_results_flight} presents held-out accuracy on the flight
task across all three base models.
The original LLMs and prompting add-ons (CoT and Self-consistency)
perform considerably worse than \textsc{AdaptFuse}: even the best
prompting method (Self-consistency) reaches only 44.7\% on Gemma in the
final round, compared to 76.2\% for \textsc{AdaptFuse}.
Fine-tuned Oracle Learning narrows the gap but still lags behind by a
substantial margin (59.1\% vs.\ 76.2\% on Gemma).
Bayesian Teaching, the strongest existing baseline, achieves 73.4\% on
Gemma and 74.3\% on Llama after fine-tuning on task-specific Bayesian
teacher demonstrations.
\textsc{AdaptFuse} surpasses Bayesian Teaching on all three models
(+2.8 on Gemma, +2.0 on Llama, +2.1 on Qwen) despite requiring no
training or fine-tuning, demonstrating that externalizing probabilistic
computation is not only a viable but a superior strategy.

\textbf{Accuracy over interaction rounds.}
\Cref{fig:flight_rounds} traces held-out accuracy as a function of
completed interaction rounds (1 through 5).
Two distinct patterns emerge.
Prompting baselines (Direct, CoT, Self-consistency) show minimal
improvement beyond the first round, confirming the finding
of~\citet{qiu2026bayesian} that LLMs struggle to accumulate evidence
through in-context reasoning alone.
In contrast, \textsc{AdaptFuse} improves monotonically after first 
rounds: as the symbolic Bayesian posterior concentrates, the
entropy-adaptive weighting progressively down-weights LLM noise
(\Cref{prop:schedule} in \Cref{app:proof_schedule}), yielding
steady accuracy gains.
Fine-tuned Bayesian Teaching also improves across rounds but at a
slower rate, and \textsc{AdaptFuse} overtakes it after first round.

\begin{figure}[]
\centering
\includegraphics[width=\linewidth]{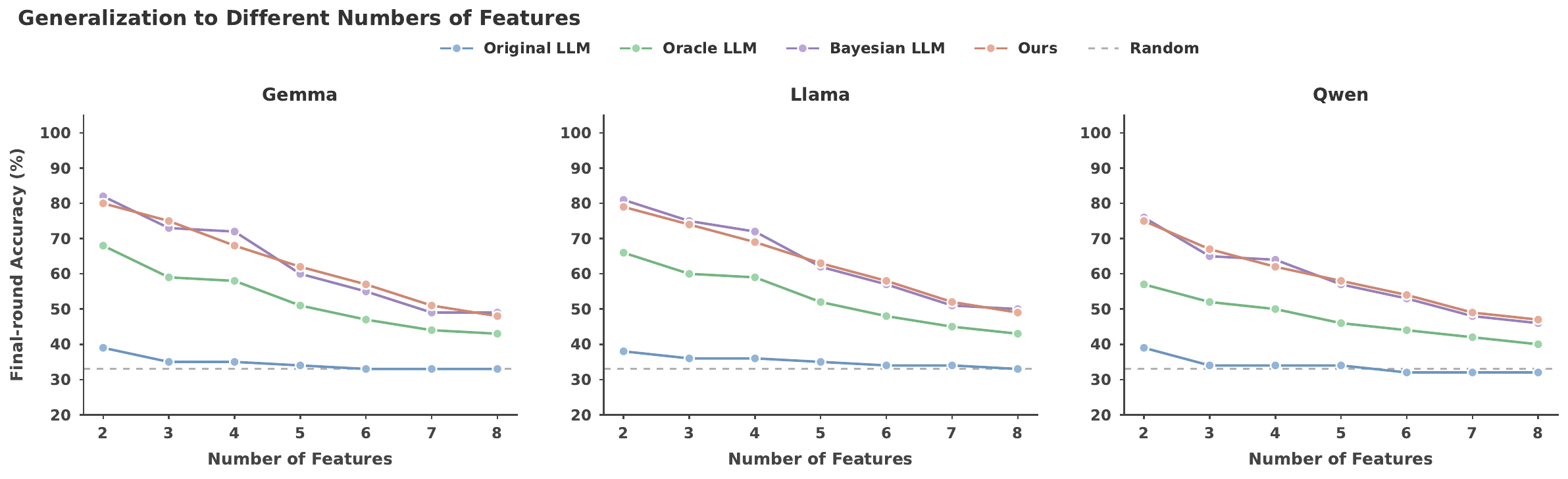}
\vspace{-0.5cm}
\caption{\textbf{Varying task complexity.} Final-round accuracy across
methods as the number of item attributes $d$ varies from 2 to 8.}
\label{fig:features}
\vspace{-0.6cm}
\end{figure}

\textbf{Generalization to varying feature dimensionality.}
\Cref{fig:features} evaluates performance as the number of item
attributes $d$ varies from 2 to 8, serving as a proxy for task
complexity.
\textsc{AdaptFuse} matches or outperforms fine-tuned baselines across
all values of $d$.
Performance degrades gracefully with increasing $d$ for all methods,
suggesting that this reflects inherent task difficulty (more attributes
lead to a larger hypothesis space) rather than a limitation specific to
any particular approach.

\begin{figure*}[]
    \includegraphics[width=\textwidth]{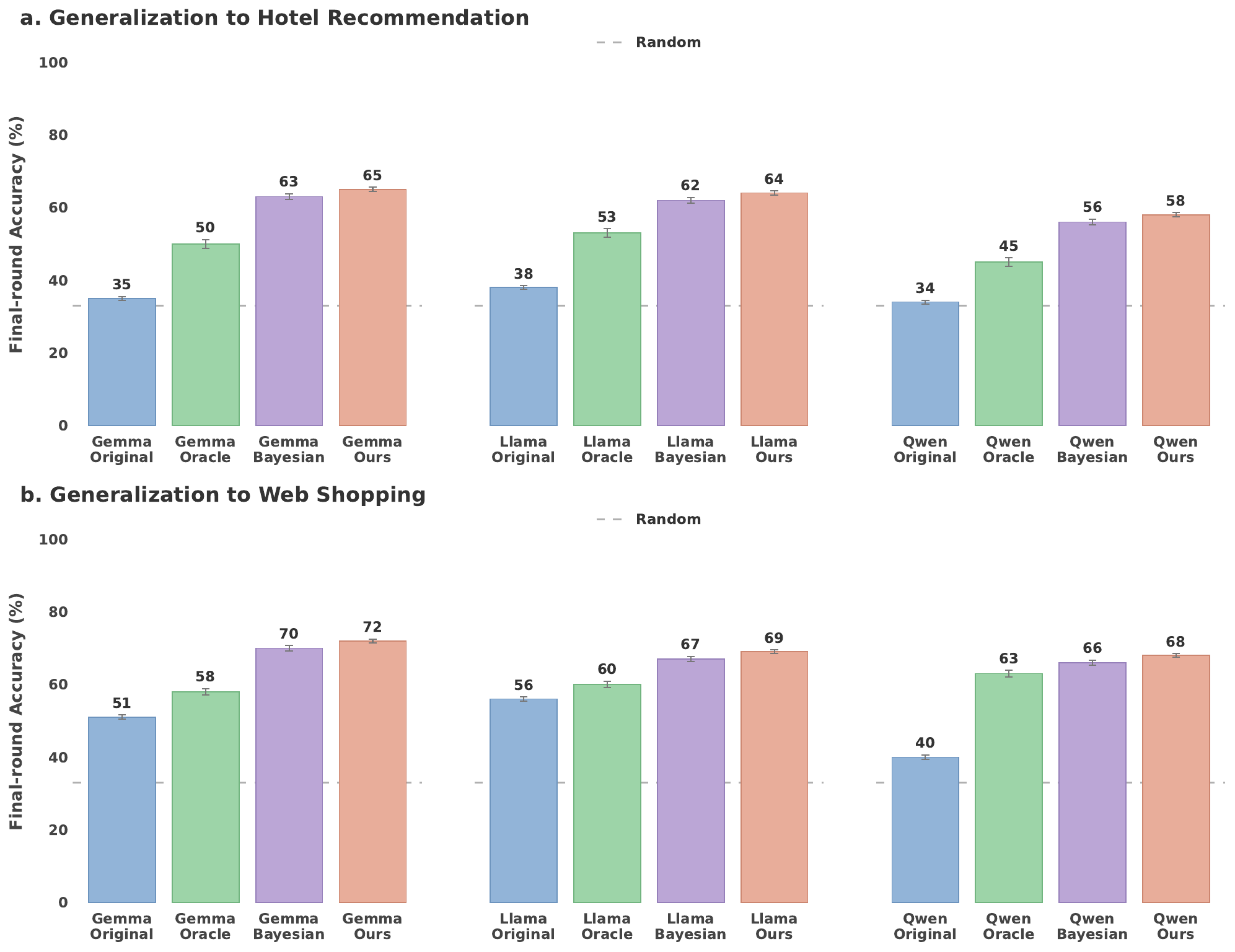}
    \begin{subfigure}[b]{0pt}
        \phantomsubcaption
        \label{fig:fig5b}
    \end{subfigure}
    \begin{subfigure}[b]{0pt}
        \phantomsubcaption
        \label{fig:fig5c}
    \end{subfigure}
\vspace{-0.5cm}
\caption{\textbf{Generalization to new domains.}
\textbf{(a)} Final-round accuracy on the hotel recommendation task.
\textbf{(b)} Final-round accuracy on the web shopping task.
Error bars denote standard error over three random seeds.}
\label{fig:fig5_gen_results}
\vspace{-0.6cm}
\end{figure*}

\textbf{Generalization to new domains.}
\Cref{fig:fig5_gen_results} presents results on hotel recommendation and
WebShopping.
\textsc{AdaptFuse} achieves the highest accuracy on both tasks, though
absolute margins over Bayesian Teaching are smaller than on the flight
task, reflecting the increased difficulty of these domains.
Notably, \textsc{AdaptFuse} outperforms Bayesian Teaching even on
WebShopping, where the fine-tuned models are trained on domain-specific
demonstrations.
This validates our core claim: a single training-free algorithm
generalizes across domains without domain-specific optimization, avoiding
the privacy and data-collection costs associated with fine-tuning on user
interaction logs.

\begin{table}[]
\centering
\small
\begin{tabular}{lcccc}
\toprule
\textbf{Method} & \textbf{Gemma} & \textbf{Llama}
    & \textbf{Qwen} & \textbf{Calls} \\
\midrule
Direct prompting          & 0.52s    & 0.74s    & 0.63s    & 1 \\
CoT                       & 2.15s    & 2.87s    & 2.42s    & 1 \\
Self-consistency          & 3.67s    & 3.79s    & 3.68s    & 5 \\
Bayesian Teaching         & 1.42s    & 1.34s    & 1.05s    & 1 \\
\textbf{\textsc{AdaptFuse}}  & 4.35s & 4.63s & 4.51s & 5 \\
\bottomrule
\end{tabular}
\vspace{-0.2cm}
\caption{\textbf{Inference time per round} (single A100 GPU, flight recommendation task).
\textsc{AdaptFuse} overhead is due to $N=5$ LLM calls; the symbolic
module adds negligible latency.}
\label{tab:inference_time}
\vspace{-0.4cm}
\end{table}


\textbf{Inference time.}
\Cref{tab:inference_time} reports average inference time per interaction
round.
The \textsc{AdaptFuse} overhead relative to direct prompting is due mostly to the $N=5$ LLM sampling calls. This overhead is linear in $N$ and can be reduced by decreasing the
sample count with graceful accuracy degradation, whereas fine-tuning
costs are incurred once per domain and cannot be amortized across
deployment targets.


\subsection{Ablation Study}
\label{sec:ablation}



We ablate the key design choices of \textsc{AdaptFuse} on the flight task
(Gemma~2 9B, \Cref{tab:ablation}). We focus on the fusion mechanism and aggregation. Replacing entropy-adaptive weighting with a fixed 50/50 blend costs 3.5 final-round points, because the static weight continues to give equal influence to the LLM even after the symbolic posterior has concentrated. For LLM aggregation, Dirichlet weighting outperforms majority vote by 3.1 points (confidence-weighted pseudo-counts produce a more calibrated distribution), and adding temporal momentum contributes a further 2.2 points by smoothing the high per-round variance from only $N{=}5$ samples. Detailed analysis is in \Cref{app:ablation}.

\begin{table}[]
\centering
\small
\begin{tabular}{llcc}
\toprule
\textbf{Component} & \textbf{Variant} & \textbf{1st Rnd} & \textbf{Final} \\
\midrule
Fusion       & Fixed ($\lambda{=}0.5$)   & 52.4 & 72.7 \\
\midrule
\multirow{2}{*}{Aggregation}
             & Majority vote              & 50.6 & 73.1 \\
             & Dirichlet (no EMA)         & 52.3 & 74.0 \\
\midrule
\multicolumn{2}{l}{\textbf{\textsc{AdaptFuse} (full)}} & \textbf{53.1} & \textbf{76.2} \\
\bottomrule
\end{tabular}
\vspace{-0.2cm}
\caption{\textbf{Ablation study }(Flight recommendation task, Gemma~2 9B). Each row modifies one
component while keeping the rest intact.}
\label{tab:ablation}
\vspace{-0.2cm}
\end{table}

\section{Conclusion}
\label{sec:conclusion}

We presented \textsc{AdaptFuse}, a training-free framework for sequential
preference learning that combines symbolic Bayesian belief tracking with
frozen LLM semantic reasoning via entropy-adaptive fusion.
Across three domains and three base models, \textsc{AdaptFuse}
consistently outperforms both prompting baselines and fine-tuned methods
without any weight updates, demonstrating that externalizing
probabilistic computation is a viable and effective alternative to
fine-tuning for personalized recommendation.
Operating entirely at inference time on frozen lightweight open-source
models that can be served on-premise, \textsc{AdaptFuse} requires no
transmission, collection, or storage of sensitive user interaction data,
offering a practical path toward fully privacy-preserving personalization.

\bibliography{colm2026_conference}

@article{brown2020language,
  title={Language models are few-shot learners},
  author={Brown, Tom and Mann, Benjamin and Ryder, Nick and Subbiah, Melanie and Kaplan, Jared D and Dhariwal, Prafulla and Neelakantan, Arvind and Shyam, Pranav and Sastry, Girish and Askell, Amanda and others},
  journal={Advances in neural information processing systems},
  volume={33},
  pages={1877--1901},
  year={2020}
}

@article{min2022rethinking,
  title={Rethinking the role of demonstrations: What makes in-context learning work?},
  author={Min, Sewon and Lyu, Xinxi and Holtzman, Ari and Artetxe, Mikel and Lewis, Mike and Hajishirzi, Hannaneh and Zettlemoyer, Luke},
  journal={arXiv preprint arXiv:2202.12837},
  year={2022}
}

@article{xie2021explanation,
  title={An explanation of in-context learning as implicit bayesian inference},
  author={Xie, Sang Michael and Raghunathan, Aditi and Liang, Percy and Ma, Tengyu},
  journal={arXiv preprint arXiv:2111.02080},
  year={2021}
}

@article{dai2022can,
  title={Why can gpt learn in-context? language models implicitly perform gradient descent as meta-optimizers},
  author={Dai, Damai and Sun, Yutao and Dong, Li and Hao, Yaru and Ma, Shuming and Sui, Zhifang and Wei, Furu},
  journal={arXiv preprint arXiv:2212.10559},
  year={2022}
}

@article{kojima2022large,
  title={Large language models are zero-shot reasoners},
  author={Kojima, Takeshi and Gu, Shixiang Shane and Reid, Machel and Matsuo, Yutaka and Iwasawa, Yusuke},
  journal={Advances in neural information processing systems},
  volume={35},
  pages={22199--22213},
  year={2022}
}

@article{wang2022self,
  title={Self-consistency improves chain of thought reasoning in language models},
  author={Wang, Xuezhi and Wei, Jason and Schuurmans, Dale and Le, Quoc and Chi, Ed and Narang, Sharan and Chowdhery, Aakanksha and Zhou, Denny},
  journal={arXiv preprint arXiv:2203.11171},
  year={2022}
}

@article{yao2023tree,
  title={Tree of thoughts: Deliberate problem solving with large language models},
  author={Yao, Shunyu and Yu, Dian and Zhao, Jeffrey and Shafran, Izhak and Griffiths, Tom and Cao, Yuan and Narasimhan, Karthik},
  journal={Advances in neural information processing systems},
  volume={36},
  pages={11809--11822},
  year={2023}
}

@article{wei2022chain,
  title={Chain-of-thought prompting elicits reasoning in large language models},
  author={Wei, Jason and Wang, Xuezhi and Schuurmans, Dale and Bosma, Maarten and Xia, Fei and Chi, Ed and Le, Quoc V and Zhou, Denny and others},
  journal={Advances in neural information processing systems},
  volume={35},
  pages={24824--24837},
  year={2022}
}

@article{nafar2023teaching,
  title={Teaching probabilistic logical reasoning to transformers},
  author={Nafar, Aliakbar and Venable, Kristen Brent and Kordjamshidi, Parisa},
  journal={arXiv preprint arXiv:2305.13179},
  year={2023}
}

@article{kadavath2022language,
  title={Language models (mostly) know what they know},
  author={Kadavath, Saurav and Conerly, Tom and Askell, Amanda and Henighan, Tom and Drain, Dawn and Perez, Ethan and Schiefer, Nicholas and Hatfield-Dodds, Zac and DasSarma, Nova and Tran-Johnson, Eli and others},
  journal={arXiv preprint arXiv:2207.05221},
  year={2022}
}

@article{pournemat2025reasoning,
  title={Reasoning Under Uncertainty: Exploring Probabilistic Reasoning Capabilities of LLMs},
  author={Pournemat, Mobina and Rezaei, Keivan and Sriramanan, Gaurang and Zarei, Arman and Fu, Jiaxiang and Wang, Yang and Eghbalzadeh, Hamid and Feizi, Soheil},
  journal={arXiv preprint arXiv:2509.10739},
  year={2025}
}

@inproceedings{gupta2025enough,
  title={Enough coin flips can make llms act bayesian},
  author={Gupta, Ritwik and Corona, Rodolfo and Ge, Jiaxin and Wang, Eric and Klein, Dan and Darrell, Trevor and Chan, David M},
  booktitle={Proceedings of the 63rd Annual Meeting of the Association for Computational Linguistics (Volume 1: Long Papers)},
  pages={7634--7655},
  year={2025}
}

@article{qiu2026bayesian,
  title={Bayesian teaching enables probabilistic reasoning in large language models},
  author={Qiu, Linlu and Sha, Fei and Allen, Kelsey and Kim, Yoon and Linzen, Tal and van Steenkiste, Sjoerd},
  journal={Nature Communications},
  year={2026},
  publisher={Nature Publishing Group UK London}
}

@article{feng2024bird,
  title={Bird: A trustworthy bayesian inference framework for large language models},
  author={Feng, Yu and Zhou, Ben and Lin, Weidong and Roth, Dan},
  journal={arXiv preprint arXiv:2404.12494},
  year={2024}
}

@article{zheng2025probabilistic,
  title={Probabilistic Reasoning with LLMs for k-anonymity Estimation},
  author={Zheng, Jonathan and Das, Sauvik and Ritter, Alan and Xu, Wei},
  journal={arXiv preprint arXiv:2503.09674},
  year={2025}
}

@article{rahimirad2025bayesian,
  title={Bayesian Social Deduction with Graph-Informed Language Models},
  author={Rahimirad, Shahab and Gergerli, Guven and Romero, Lucia and Qian, Angela and Olson, Matthew Lyle and Stepputtis, Simon and Campbell, Joseph},
  journal={arXiv preprint arXiv:2506.17788},
  year={2025}
}

@article{forouzandeh2025learning,
  title={Learning Hierarchical Procedural Memory for LLM Agents through Bayesian Selection and Contrastive Refinement},
  author={Forouzandeh, Saman and Peng, Wei and Moradi, Parham and Yu, Xinghuo and Jalili, Mahdi},
  journal={arXiv preprint arXiv:2512.18950},
  year={2025}
}

@article{domke2025large,
  title={Large language bayes},
  author={Domke, Justin},
  journal={arXiv preprint arXiv:2504.14025},
  year={2025}
}

@article{carpenter2017stan,
  title={Stan: A probabilistic programming language},
  author={Carpenter, Bob and Gelman, Andrew and Hoffman, Matthew D and Lee, Daniel and Goodrich, Ben and Betancourt, Michael and Brubaker, Marcus and Guo, Jiqiang and Li, Peter and Riddell, Allen},
  journal={Journal of statistical software},
  volume={76},
  pages={1--32},
  year={2017}
}

@article{kanda2025refinestat,
  title={REFINESTAT: Efficient Exploration for Probabilistic Program Synthesis},
  author={Kanda, Madhav and Ugare, Shubham and Misailovic, Sasa},
  journal={arXiv preprint arXiv:2509.01082},
  year={2025}
}

@inproceedings{lin2022inferring,
  title={Inferring rewards from language in context},
  author={Lin, Jessy and Fried, Daniel and Klein, Dan and Dragan, Anca},
  booktitle={Proceedings of the 60th Annual Meeting of the Association for Computational Linguistics (Volume 1: Long Papers)},
  pages={8546--8560},
  year={2022}
}

@article{yao2022webshop,
  title={Webshop: Towards scalable real-world web interaction with grounded language agents},
  author={Yao, Shunyu and Chen, Howard and Yang, John and Narasimhan, Karthik},
  journal={Advances in Neural Information Processing Systems},
  volume={35},
  pages={20744--20757},
  year={2022}
}

@article{team2024gemma,
  title={Gemma 2: Improving open language models at a practical size, 2024},
  author={Team, Gemma and Riviere, Morgane and Pathak, Shreya and Sessa, Pier Giuseppe and Hardin, Cassidy and Bhupatiraju, Surya and Hussenot, L{\'e}onard and Mesnard, Thomas and Shahriari, Bobak and Ram{\'e}, Alexandre and others},
  journal={URL https://arxiv. org/abs/2408.00118},
  volume={1},
  number={3},
  year={2024}
}

@article{grattafiori2024llama,
  title={The llama 3 herd of models},
  author={Grattafiori, Aaron and Dubey, Abhimanyu and Jauhri, Abhinav and Pandey, Abhinav and Kadian, Abhishek and Al-Dahle, Ahmad and Letman, Aiesha and Mathur, Akhil and Schelten, Alan and Vaughan, Alex and others},
  journal={arXiv preprint arXiv:2407.21783},
  year={2024}
}

@article{hui2024qwen2,
  title={Qwen2. 5-coder technical report},
  author={Hui, Binyuan and Yang, Jian and Cui, Zeyu and Yang, Jiaxi and Liu, Dayiheng and Zhang, Lei and Liu, Tianyu and Zhang, Jiajun and Yu, Bowen and Lu, Keming and others},
  journal={arXiv preprint arXiv:2409.12186},
  year={2024}
}

@article{murtaza2022ai,
  title={AI-based personalized e-learning systems: Issues, challenges, and solutions},
  author={Murtaza, Mir and Ahmed, Yamna and Shamsi, Jawwad Ahmed and Sherwani, Fahad and Usman, Mariam},
  journal={IEEE access},
  volume={10},
  pages={81323--81342},
  year={2022},
  publisher={IEEE}
}

@article{lindgren2025emerging,
  title={Emerging roles and relationships among humans and interactive AI systems},
  author={Lindgren, Helena},
  journal={International Journal of Human--Computer Interaction},
  volume={41},
  number={17},
  pages={10595--10617},
  year={2025},
  publisher={Taylor \& Francis}
}

@article{huang2025recommender,
  title={Recommender ai agent: Integrating large language models for interactive recommendations},
  author={Huang, Xu and Lian, Jianxun and Lei, Yuxuan and Yao, Jing and Lian, Defu and Xie, Xing},
  journal={ACM Transactions on Information Systems},
  volume={43},
  number={4},
  pages={1--33},
  year={2025},
  publisher={ACM New York, NY}
}

@inproceedings{cai2025agentic,
  title={Agentic feedback loop modeling improves recommendation and user simulation},
  author={Cai, Shihao and Zhang, Jizhi and Bao, Keqin and Gao, Chongming and Wang, Qifan and Feng, Fuli and He, Xiangnan},
  booktitle={Proceedings of the 48th International ACM SIGIR conference on Research and Development in Information Retrieval},
  pages={2235--2244},
  year={2025}
}

@inproceedings{huang2026towards,
  title={Towards next-generation recommender systems: A benchmark for personalized recommendation assistant with llms},
  author={Huang, Jiani and Wang, Shijie and Ning, Liangbo and Fan, Wenqi and Wang, Shuaiqiang and Yin, Dawei and Li, Qing},
  booktitle={Proceedings of the Nineteenth ACM International Conference on Web Search and Data Mining},
  pages={217--226},
  year={2026}
}

@article{lin2026position,
  title={Position: Human-Centric AI Requires a Minimum Viable Level of Human Understanding},
  author={Lin, Fangzhou and Ge, Qianwen and Xu, Lingyu and Li, Peiran and Gao, Xiangbo and Xing, Shuo and Yamada, Kazunori and Zhang, Ziming and Zhang, Haichong and Tu, Zhengzhong},
  journal={arXiv preprint arXiv:2602.00854},
  year={2026}
}

@article{gao2024aligning,
  title={Aligning llm agents by learning latent preference from user edits},
  author={Gao, Ge and Taymanov, Alexey and Salinas, Eduardo and Mineiro, Paul and Misra, Dipendra},
  journal={Advances in neural information processing systems},
  volume={37},
  pages={136873--136896},
  year={2024}
}

@inproceedings{milani2025aligning,
  title={Aligning agent policies with preferences: Human-centered interpretable reinforcement learning},
  author={Milani, Stephanie and Zhang, Zhicheng and Topin, Nicholay and Xia, Lirong and Fang, Fei},
  booktitle={Proceedings of the AAAI/ACM Conference on AI, Ethics, and Society},
  volume={8},
  number={2},
  pages={1711--1723},
  year={2025}
}

@article{he2025emerged,
  title={The emerged security and privacy of llm agent: A survey with case studies},
  author={He, Feng and Zhu, Tianqing and Ye, Dayong and Liu, Bo and Zhou, Wanlei and Yu, Philip S},
  journal={ACM Computing Surveys},
  volume={58},
  number={6},
  pages={1--36},
  year={2025},
  publisher={ACM New York, NY}
}

@article{yao2024survey,
  title={A survey on large language model (llm) security and privacy: The good, the bad, and the ugly},
  author={Yao, Yifan and Duan, Jinhao and Xu, Kaidi and Cai, Yuanfang and Sun, Zhibo and Zhang, Yue},
  journal={High-Confidence Computing},
  volume={4},
  number={2},
  pages={100211},
  year={2024},
  publisher={Elsevier}
}

@article{gan2024navigating,
  title={Navigating the risks: A survey of security, privacy, and ethics threats in llm-based agents},
  author={Gan, Yuyou and Yang, Yong and Ma, Zhe and He, Ping and Zeng, Rui and Wang, Yiming and Li, Qingming and Zhou, Chunyi and Li, Songze and Wang, Ting and others},
  journal={arXiv preprint arXiv:2411.09523},
  year={2024}
}

@article{mimno2012topic,
  title={Topic models conditioned on arbitrary features with dirichlet-multinomial regression},
  author={Mimno, David and McCallum, Andrew},
  journal={arXiv preprint arXiv:1206.3278},
  year={2012}
}

@article{holmes2012dirichlet,
  title={Dirichlet multinomial mixtures: generative models for microbial metagenomics},
  author={Holmes, Ian and Harris, Keith and Quince, Christopher},
  journal={PloS one},
  volume={7},
  number={2},
  pages={e30126},
  year={2012},
  publisher={Public Library of Science San Francisco, USA}
}

@article{zaiser2023exact,
  title={Exact Bayesian inference on discrete models via probability generating functions: a probabilistic programming approach},
  author={Zaiser, Fabian and Murawski, Andrzej and Ong, Chih-Hao Luke},
  journal={Advances in Neural Information Processing Systems},
  volume={36},
  pages={2427--2462},
  year={2023}
}

@inproceedings{fayyad1993multi,
  title={Multi-interval discretization of continuous-valued attributes for classification learning},
  author={Fayyad, Usama M and Irani, Keki B and others},
  booktitle={Ijcai},
  volume={93},
  number={2},
  pages={1022--1029},
  year={1993}
}

@article{luce1977choice,
  title={The choice axiom after twenty years},
  author={Luce, R Duncan},
  journal={Journal of mathematical psychology},
  volume={15},
  number={3},
  pages={215--233},
  year={1977},
  publisher={Elsevier}
}

@book{luce1959individual,
  title={Individual choice behavior},
  author={Luce, R Duncan and others},
  volume={4},
  year={1959},
  publisher={Wiley New York}
}

@inproceedings{madsen2005modeling,
  title={Modeling word burstiness using the Dirichlet distribution},
  author={Madsen, Rasmus E and Kauchak, David and Elkan, Charles},
  booktitle={Proceedings of the 22nd international conference on Machine learning},
  pages={545--552},
  year={2005}
}

@inproceedings{elkan2006clustering,
  title={Clustering documents with an exponential-family approximation of the Dirichlet compound multinomial distribution},
  author={Elkan, Charles},
  booktitle={Proceedings of the 23rd international conference on Machine learning},
  pages={289--296},
  year={2006}
}

@misc{minka2000estimating,
  title={Estimating a Dirichlet distribution},
  author={Minka, Thomas},
  year={2000},
  publisher={Technical report, MIT}
}

@article{doob1949heuristic,
  title={Heuristic approach to the Kolmogorov-Smirnov theorems},
  author={Doob, Joseph L},
  journal={The Annals of Mathematical Statistics},
  pages={393--403},
  year={1949},
  publisher={JSTOR}
}

@inproceedings{ghosal1997review,
  title={A review of consistency and convergence of posterior distribution},
  author={Ghosal, Subhashis},
  booktitle={Varanashi Symposium in Bayesian Inference, Banaras Hindu University},
  year={1997}
}

@book{lindley1972bayesian,
  title={Bayesian statistics: A review},
  author={Lindley, Dennis Victor},
  year={1972},
  publisher={SIAM}
}

@article{hansen2001acknowledging,
  title={Acknowledging misspecification in macroeconomic theory},
  author={Hansen, Lars Peter and Sargent, Thomas J},
  journal={Review of Economic Dynamics},
  volume={4},
  number={3},
  pages={519--535},
  year={2001},
  publisher={Elsevier}
}

@article{kleijn2012bernstein,
  title={The Bernstein-von-Mises theorem under misspecification},
  author={Kleijn, Bas JK and Van der Vaart, Aad W},
  year={2012}
}

@inproceedings{perez2008kullback,
  title={Kullback-Leibler divergence estimation of continuous distributions},
  author={P{\'e}rez-Cruz, Fernando},
  booktitle={2008 IEEE international symposium on information theory},
  pages={1666--1670},
  year={2008},
  organization={IEEE}
}

@article{zhang2023properties,
  title={On the properties of kullback-leibler divergence between multivariate gaussian distributions},
  author={Zhang, Yufeng and Pan, Jialu and Li, Li Ken and Liu, Wanwei and Chen, Zhenbang and Liu, Xinwang and Wang, Ji},
  journal={Advances in neural information processing systems},
  volume={36},
  pages={58152--58165},
  year={2023}
}

@inproceedings{zhang2025gps,
  title={Gps: A probabilistic distributional similarity with gumbel priors for set-to-set matching},
  author={Zhang, Ziming and Lin, Fangzhou and Liu, Haotian and Morales, Jose and Zhang, Haichong and Yamada, Kazunori and Kolachalama, Vijaya B and Saligrama, Venkatesh},
  booktitle={The Thirteenth International Conference on Learning Representations},
  year={2025}
}

@inproceedings{yue2024understanding,
  title={Understanding hyperbolic metric learning through hard negative sampling},
  author={Yue, Yun and Lin, Fangzhou and Mou, Guanyi and Zhang, Ziming},
  booktitle={Proceedings of the IEEE/CVF Winter Conference on Applications of Computer Vision},
  pages={1891--1903},
  year={2024}
}

@article{li2026traversal,
  title={Traversal-as-Policy: Log-Distilled Gated Behavior Trees as Externalized, Verifiable Policies for Safe, Robust, and Efficient Agents},
  author={Li, Peiran and Sun, Jiashuo and Lin, Fangzhou and Xing, Shuo and Fu, Tianfu and Feng, Suofei and Ni, Chaoqun and Tu, Zhengzhong},
  journal={arXiv preprint arXiv:2603.05517},
  year={2026}
}

@article{zhang2024deep,
  title={Deep loss convexification for learning iterative models},
  author={Zhang, Ziming and Shao, Yuping and Zhang, Yiqing and Lin, Fangzhou and Zhang, Haichong and Rundensteiner, Elke},
  journal={IEEE Transactions on Pattern Analysis and Machine Intelligence},
  volume={47},
  number={3},
  pages={1501--1513},
  year={2024},
  publisher={IEEE}
}

@article{li2026bibagent,
  title={BibAgent: An Agentic Framework for Traceable Miscitation Detection in Scientific Literature},
  author={Li, Peiran and Lin, Fangzhou and Xing, Shuo and Zheng, Xiang and Hong, Xi and Yang, Siyuan and Sun, Jiashuo and Tu, Zhengzhong and Ni, Chaoqun},
  journal={arXiv preprint arXiv:2601.16993},
  year={2026}
}

@article{achiam2023gpt,
  title={Gpt-4 technical report},
  author={Achiam, Josh and Adler, Steven and Agarwal, Sandhini and Ahmad, Lama and Akkaya, Ilge and Aleman, Florencia Leoni and Almeida, Diogo and Altenschmidt, Janko and Altman, Sam and Anadkat, Shyamal and others},
  journal={arXiv preprint arXiv:2303.08774},
  year={2023}
}

@article{hurst2024gpt,
  title={Gpt-4o system card},
  author={Hurst, Aaron and Lerer, Adam and Goucher, Adam P and Perelman, Adam and Ramesh, Aditya and Clark, Aidan and Ostrow, AJ and Welihinda, Akila and Hayes, Alan and Radford, Alec and others},
  journal={arXiv preprint arXiv:2410.21276},
  year={2024}
}

@article{dai2025language,
  title={A language anchor-guided method for robust noisy domain generalization},
  author={Dai, Zilin and Wang, Lehong and Lin, Fangzhou and Wang, Yidong and Li, Zhigang and Yamada, Kazunori D and Zhang, Ziming and Lu, Wang},
  journal={arXiv preprint arXiv:2503.17211},
  year={2025}
}

@article{cheng2024exploring,
  title={Exploring large language model based intelligent agents: Definitions, methods, and prospects},
  author={Cheng, Yuheng and Zhang, Ceyao and Zhang, Zhengwen and Meng, Xiangrui and Hong, Sirui and Li, Wenhao and Wang, Zihao and Wang, Zekai and Yin, Feng and Zhao, Junhua and others},
  journal={arXiv preprint arXiv:2401.03428},
  year={2024}
}

@article{jiang2025timepre,
  title={TimePre: Bridging Accuracy, Efficiency, and Stability in Probabilistic Time-Series Forecasting},
  author={Jiang, Lingyu and Xu, Lingyu and Li, Peiran and Ge, Qianwen and Zhuang, Dingyi and Xing, Shuo and Chen, Wenjing and Gao, Xiangbo and Chen, Ting-Hsuan and Zhan, Xueying and others},
  journal={arXiv preprint arXiv:2511.18539},
  year={2025}
}

@article{jiang2025kanmixer,
  title={KANMixer: Can KAN Serve as a New Modeling Core for Long-term Time Series Forecasting?},
  author={Jiang, Lingyu and Wang, Yuping and Su, Yao and Xing, Shuo and Chen, Wenjing and Zhang, Xin and Tu, Zhengzhong and Zhang, Ziming and Lin, Fangzhou and Zielewski, Michael and others},
  journal={arXiv preprint arXiv:2508.01575},
  year={2025}
}

@article{li2026let,
  title={Let the Abyss Stare Back Adaptive Falsification for Autonomous Scientific Discovery},
  author={Li, Peiran and Lin, Fangzhou and Xing, Shuo and Sun, Jiashuo and Zhang, Dylan and Yang, Siyuan and Ni, Chaoqun and Tu, Zhengzhong},
  journal={arXiv preprint arXiv:2603.29045},
  year={2026}
}
\bibliographystyle{colm2026_conference}

\appendix


\section{Proofs}
\label{app:proofs}

\subsection{Proof of Lemma~\ref{lemma:dirichlet}}
\label{app:proof_dirichlet}

\begin{proof}
\textit{Step 1: validity of pseudo-counts.}
For each sample $s$ and option $i$, $w_{s,i} \geq 0$ since
$c_s \in [0,1]$.
The weights sum to one:
\[
    \sum_{i=1}^{K} w_{s,i}
    = c_s + (K-1) \cdot \frac{1-c_s}{K-1}
    = 1.
\]
Hence $\{w_{s,i}\}_{i=1}^{K}$ is a valid probability vector for each $s$.
Treating it as a fractional pseudo-count observation is a standard
extension of the Dirichlet-Multinomial model that preserves the conjugate
update structure~\citep{minka2000estimating,elkan2006clustering}.

\textit{Step 2: conjugate posterior.}
Let $\theta \in \Delta^{K-1}$ be the unknown categorical parameter with
prior $\theta \sim \mathrm{Dir}(\alpha_0 \mathbf{1}_K)$.
Accumulating pseudo-count vectors $\{w_{s,i}\}$ over valid samples
yields the posterior:
\[
    \theta \mid \{w_{s,i}\}
    \;\sim\; \mathrm{Dir}(\boldsymbol{\alpha}),
    \qquad
    \alpha_i = \alpha_0 + \textstyle\sum_{s} w_{s,i},
\]
by conjugacy of the Dirichlet family with respect to (fractional)
count observations.
The posterior mean is:
\[
    \mathbb{E}[\theta_i \mid \{w_{s,i}\}]
    = \frac{\alpha_i}{\sum_j \alpha_j}
    = \pi^{\mathrm{llm}}_{\mathrm{raw},i}.
    \qedhere
\]
\end{proof}

\subsection{Adaptive Fusion Schedule}
\label{app:proof_schedule}

\begin{proposition}[Fusion Bound]
\label{prop:schedule}
At any round $t$, the LLM's share of the fused prediction satisfies:
\begin{equation}
    \frac{w^{\mathrm{llm}}_t}{w^{\mathrm{llm}}_t + w^{\mathrm{sym}}_t}
    \;\leq\; \frac{1}{1 + w^{\mathrm{sym}}_t}.
    \label{eq:fusion_bound}
\end{equation}
This bound is monotonically decreasing in $w^{\mathrm{sym}}_t$.
In particular:
\begin{enumerate}
    \item When the symbolic posterior is diffuse
        ($\tilde{H}(\pi^{\mathrm{sym}}) \approx 1$, i.e.,
        $w^{\mathrm{sym}}_t \approx \varepsilon_w$), the bound
        approaches $1/(1+\varepsilon_w) \approx 1$, and the two
        sources contribute roughly equally.
    \item When the symbolic posterior is confident
        ($\tilde{H}(\pi^{\mathrm{sym}}) \approx 0$, i.e.,
        $w^{\mathrm{sym}}_t \approx 1$), the bound becomes
        $1/2$, guaranteeing that the symbolic signal receives at
        least half the total weight. If the LLM is simultaneously
        uncertain ($w^{\mathrm{llm}}_t \approx \varepsilon_w$),
        the ratio further shrinks to
        $\varepsilon_w/(\varepsilon_w + 1) \approx 0$.
\end{enumerate}
\end{proposition}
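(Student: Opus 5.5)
The plan is to reduce \eqref{eq:fusion_bound} to an upper bound on $w^{\mathrm{llm}}_t$. The main inequality follows from the single fact $w^{\mathrm{llm}}_t \le 1$ and the monotonicity of $x \mapsto x/(x+a)$.

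First I would record the range of the weights. By \eqref{eq:norm_entropy}, $\tilde{H}(\pi) \in [0,1]$ for any $\pi \in \Delta^{K-1}$. Hence $1-\tilde{H}(\pi) \in [0,1]$, and by \eqref{eq:adaptive_weights} both weights lie in $[\varepsilon_w, 1]$, since $\varepsilon_w = 10^{-3} < 1$. In particular both weights are strictly positive, so every ratio below is well defined.

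Next, fix $a = w^{\mathrm{sym}}_t > 0$ and consider $f(x) = x/(x+a)$ on $x>0$. Its derivative is $f'(x) = a/(x+a)^2 > 0$, so $f$ is increasing. Because $w^{\mathrm{llm}}_t \le 1$, this gives $f(w^{\mathrm{llm}}_t) \le f(1) = 1/(1+w^{\mathrm{sym}}_t)$, which is \eqref{eq:fusion_bound}. The bound $1/(1+a)$ is strictly decreasing in $a > 0$, which is the monotonicity claim.

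The two limiting cases come from substituting values into this bound.
\begin{enumerate}
\item If $\tilde{H}(\pi^{\mathrm{sym}}) \approx 1$, then $w^{\mathrm{sym}}_t = \varepsilon_w$ (the floor is active), and the bound equals $1/(1+\varepsilon_w) \approx 1$.
\item If $\tilde{H}(\pi^{\mathrm{sym}}) \approx 0$, then $w^{\mathrm{sym}}_t \approx 1$ and the bound is about $1/2$. Equivalently, the symbolic share $1 - f(w^{\mathrm{llm}}_t) \ge a/(1+a)$ is about $1/2$.
\item If in addition $w^{\mathrm{llm}}_t \approx \varepsilon_w$, direct substitution gives the ratio $\varepsilon_w/(\varepsilon_w + 1) \approx 0$.
\end{enumerate}

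The main obstacle is interpretive, not technical. The gloss in case 1 that ``the two sources contribute roughly equally'' does not follow from the bound. A bound near $1$ only says the LLM \emph{may} dominate, and its actual share is $w^{\mathrm{llm}}_t/(w^{\mathrm{llm}}_t+\varepsilon_w)$, which could be close to $1$. I would therefore state case 1 as ``the bound is vacuous, so the LLM is not constrained,'' and mention equal contribution only under the extra assumption that $w^{\mathrm{llm}}_t \approx \varepsilon_w$ as well.

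I would also note that ``share'' means the ratio of weights, not the share of mass in $\pi^*$. Since $\pi^{\mathrm{llm}}$ and $\pi^{\mathrm{sym}}$ both sum to one, the normalized mixture in \eqref{eq:adaptive_fusion} assigns exactly these coefficients, so the two notions agree.
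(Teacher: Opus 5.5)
Your proposal is correct and is essentially the paper's proof. Both arguments rest on $w^{\mathrm{llm}}_t \le 1$ together with $w^{\mathrm{sym}}_t > 0$, and both obtain the limiting cases by direct substitution; your derivative argument for $x/(x+a)$ simply spells out the step the paper states in one line. Your caveat about item~1 is also right: when $w^{\mathrm{sym}}_t \approx \varepsilon_w$, the LLM share is $w^{\mathrm{llm}}_t/(w^{\mathrm{llm}}_t+\varepsilon_w)$, which can be close to $1$, and the paper's proof only establishes $1/(1+\varepsilon_w)\approx 1$ without ever justifying the ``roughly equally'' claim.
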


\begin{proof}
Since $w^{\mathrm{llm}}_t \in [\varepsilon_w, 1]$ by definition
(normalized entropy is in $[0,1]$, so
$1 - \tilde{H} \in [0,1]$, and the $\max$ with $\varepsilon_w$ ensures
positivity):
\[
    \frac{w^{\mathrm{llm}}_t}{w^{\mathrm{llm}}_t + w^{\mathrm{sym}}_t}
    \;\leq\; \frac{1}{1 + w^{\mathrm{sym}}_t},
\]
where the inequality uses $w^{\mathrm{llm}}_t \leq 1$.
The right-hand side is monotonically decreasing in $w^{\mathrm{sym}}_t$
since $w^{\mathrm{sym}}_t > 0$.

For item~1: when the symbolic posterior is nearly uniform,
$\tilde{H}(\pi^{\mathrm{sym}}) \approx 1$ and
$w^{\mathrm{sym}}_t = \max(\varepsilon_w, 1 - \tilde{H}) \approx \varepsilon_w$,
giving $1/(1+\varepsilon_w) \approx 1$.

For item~2: when the symbolic posterior concentrates on a single option,
$\tilde{H}(\pi^{\mathrm{sym}}) \approx 0$ and
$w^{\mathrm{sym}}_t \approx 1$, giving an upper bound of $1/(1+1)=1/2$.
For the tighter statement, substituting
$w^{\mathrm{llm}}_t = \varepsilon_w$ directly:
\[
    \frac{\varepsilon_w}{\varepsilon_w + 1} \;\approx\; 10^{-3}.
    \qedhere
\]
\end{proof}

\begin{remark}[Connection to posterior concentration]
\label{rmk:concentration}
As the number of interaction rounds grows, the Bayesian posterior
$\mathbf{b}_t$ concentrates on the true hypothesis $h^*$ (assuming
$h^* \in \mathcal{H}$; see \Cref{app:discrete_approx}).
This drives $\pi^{\mathrm{sym}}_t \to P(\cdot \mid X, h^*)$ and
$\tilde{H}(\pi^{\mathrm{sym}}_t) \to H^*_X \triangleq
\tilde{H}(P(\cdot \mid X, h^*))$.
For finite $\beta$, $H^*_X$ is a positive constant determined by
$\beta$ and the utility gaps among options; the limiting symbolic weight
is $w^{\mathrm{sym}} \to \max(\varepsilon_w,\, 1 - H^*_X)$.
With the default $\beta = 6.0$, we empirically observe that $H^*_X$ is
typically small across our evaluation domains, so $w^{\mathrm{sym}}$
approaches a value close to $1$ within the $T = 5$ interaction rounds
used in our experiments (see \Cref{sec:experiments}).
\end{remark}

\subsection{Validity of the Discrete Hypothesis Approximation}
\label{app:discrete_approx}

The use of a finite hypothesis set $\mathcal{H}$ raises the question of
whether Bayesian inference over $\mathcal{H}$ faithfully approximates
inference over the continuous space $\mathbb{R}^d$.
We discuss the two relevant cases below; the arguments are intended as
intuitive justifications rather than formal theorems.

\paragraph{Well-specified case: $h^* \in \mathcal{H}$.}
If the user's true preference $h^*$ is an element of $\mathcal{H}$,
then by standard Bayesian posterior consistency
arguments~\citep{doob1949heuristic,ghosal1997review}: the posterior concentrates on the
true data-generating parameter almost surely as $t \to \infty$, provided
that the true parameter lies in the support of the prior; we have
$b_{t, m^*} \to 1$ where $m^*$ satisfies $h_{m^*} = h^*$.
In our experimental setup, $\mathcal{H}$ is constructed from the
ground-truth preference vectors of all users in the training split.
The train/test split in each of our three datasets is organized so that
every evaluation user's preference type appears in the training set;
therefore $h^* \in \mathcal{H}$ holds by construction, and the uniform
prior $b_{0,m} = 1/M$ assigns positive mass to every hypothesis,
satisfying the support condition.

\paragraph{Misspecified case: $h^* \notin \mathcal{H}$.}
If $h^*$ does not belong to $\mathcal{H}$ (e.g., at test time in a new
domain), the posterior does not concentrate on the true parameter but
instead on the hypothesis that best explains the observed choices.
Under standard misspecification
theory~\citep{kleijn2012bernstein,lindley1972bayesian,hansen2001acknowledging}, the posterior
concentrates on the hypothesis
\[
    h_{m^*} = \operatorname{argmin}_{h_m \in \mathcal{H}} \;
    \mathrm{KL}\!\bigl(P_{\mathrm{true}}(\cdot)
    \;\|\; P(\cdot \mid \cdot, h_m)\bigr),
\]
i.e., the $h_m$ whose induced choice distribution is closest to the
true choice distribution in the Kullback--Leibler sense~\cite{perez2008kullback,zhang2023properties,zhang2024deep,yue2024understanding,zhang2025gps}.
Note that this is \emph{not} generally the Euclidean nearest neighbor
$\operatorname{argmin}_m \| h_m - h^* \|_2$; the relevant metric is
determined by the observation model (here, the softmax likelihood).
Prediction quality degrades gracefully with the KL approximation error,
suggesting that denser hypothesis sets (larger $M$) yield better
approximations when the test distribution shifts, at the cost of a
modest increase in computation.


\section{Task Details}
\label{app:task_details}

\subsection{Flight Recommendation}
\label{app:flight}

The flight recommendation task is derived from \citet{lin2022inferring}.
The agent interacts with a user over $T = 5$ rounds.
At each round, three flight options are presented to both the user and the
agent; each flight is characterized by four attributes: departure time,
duration, number of stops, and price.
Each user is associated with a fixed latent preference vector over these
attributes.
For each attribute, a user may exhibit a strong or weak preference for
either high or low values (e.g., preferring shorter flights or lower
prices), or no preference on that attribute.

After each interaction round, the agent's held-out prediction accuracy is
evaluated on 50 new sets of three flights drawn from the same distribution
but not shown during interaction.
This held-out evaluation is our primary metric: it measures how well the
agent has generalized the user's latent preference to unseen items, rather
than merely adapting to the specific options encountered during interaction.

Each flight option is presented to the agent as a short natural language
string deterministically generated from its attribute values, following the
template:
\begin{center}
\small
\texttt{Flight i: Departure time: HH:MM AM/PM, Duration: Xhr Ymin,}\\
\texttt{Number of stops: Z, Price: \$W}
\end{center}
Feature normalization maps departure time from $[06{:}00, 22{:}00]$,
duration from $[30\,\text{min}, 20\,\text{hr}]$, stops from $[0, 2]$,
and price from $[\$100, \$1000]$, each to $[0, 1]$.

\subsection{Hotel Recommendation}
\label{app:hotel}

The hotel recommendation task follows the same sequential interaction
protocol as the flight task~\citep{qiu2026bayesian}.
Each hotel is described by four attributes: distance to downtown (in km),
price per night (in USD), star rating (1--5), and number of amenities
(integer count).
As in the flight task, each hotel is presented as a short natural language
string deterministically generated from its attribute values:
\begin{center}
\small
\texttt{Hotel i: Distance to downtown: X km, Price: \$Y/night,}\\
\texttt{Rating: Z stars, Amenities: W}
\end{center}
Feature normalization maps distance from $[0.5, 20]$\,km, price from
$[\$50, \$500]$, rating from $[1, 5]$, and amenities from $[0, 10]$,
each to $[0, 1]$.

User preferences over hotel attributes are sampled from the same
preference model as the flight task, allowing direct comparison of
performance across domains with different attribute semantics.
The hypothesis set $\mathcal{H}$ is constructed separately for each domain
from the corresponding training split.

\subsection{Web Shopping}
\label{app:webshopping}

The WebShopping task uses real-world product data from the simulated
e-commerce environment of \citet{yao2022webshop}.
Each user is defined by a randomly sampled goal specifying desired product
characteristics, such as ``a machine-washable cotton shirt in size XL
under \$30''.

At each interaction round, the agent is presented with a set of products
randomly sampled from a fixed category (e.g., shirts) and must recommend
the most suitable product.
Each product is represented by a short title concatenated with a detailed
free-text description.
An example product representation is shown in \Cref{tab:webshopping_example}.

After each round, the user provides binary feedback (correct/incorrect).
The preferred product is defined as the one with the highest reward score
following the formulation of \citet{yao2022webshop}, which aggregates
attribute match, price, and rating into a scalar reward.

This task differs from the flight and hotel tasks in two key respects.
First, item descriptions are open-ended natural language rather than
structured attribute strings, making deterministic feature extraction
approximate.
Second, user goals are expressed as free-form text queries rather than
numerical preference vectors, introducing additional ambiguity in
preference modeling.
These factors make WebShopping substantially harder and are reflected in
the lower absolute accuracy of all methods on this task
(\Cref{fig:fig5_gen_results}).

\begin{table}[t]
\centering
\small
\begin{tabular}{p{2cm} p{10cm}}
\toprule
\textbf{Field} & \textbf{Content} \\
\midrule
Title       & Men's Classic Fit Short-Sleeve Crewneck T-Shirt \\
Description & 100\% cotton. Machine wash cold. Available in sizes S, M, L, XL, XXL.
              Relaxed fit. Imported. \$24.99. \\
User goal   & Looking for a cotton shirt, machine-washable, size XL, under \$30. \\
\bottomrule
\end{tabular}
\caption{Example product representation in the WebShopping task.}
\label{tab:webshopping_example}
\end{table}

\section{Implementation Details}
\label{sec:impl}

\Cref{tab:hyperparams} lists all hyperparameters of \textsc{AdaptFuse}.
$\beta = 6.0$ was set to match the empirical sharpness of user choices in
the dataset: lower values produce likelihoods too flat to discriminate
between hypotheses after one round; higher values risk premature posterior
collapse.
$m = 0.65$ yields an effective LLM sample count of
$N_{\mathrm{eff}} = N/(1-m) \approx 14$, substantially reducing per-round
sampling variance.
No hyperparameter was tuned on held-out data.

\begin{table}[]
\centering
\small
\begin{tabular}{llcll}
\toprule
\textbf{Symbol} & \textbf{Parameter} & \textbf{Value}
    & \textbf{Role} & \textbf{Eq.} \\
\midrule
$\beta$          & Softmax inverse temp. & $6.0$
    & Likelihood sharpness     & \eqref{eq:likelihood} \\
$\alpha_0$       & Dirichlet prior       & $1.0$
    & Laplace smoothing        & \eqref{eq:dirichlet_update} \\
$m$              & EMA momentum          & $0.65$
    & LLM variance reduction   & \eqref{eq:ema} \\
$N$              & LLM samples/round     & $5$
    & Aggregation sample count & \eqref{eq:dirichlet_update} \\
$\varepsilon_w$  & Fusion weight floor   & $10^{-3}$
    & Prevents zero weights    & \eqref{eq:adaptive_weights} \\
$\varepsilon$    & Likelihood floor      & $10^{-8}$
    & Numerical stability      & \eqref{eq:bayes} \\
\bottomrule
\end{tabular}
\caption{\textsc{AdaptFuse} hyperparameters. No parameter was tuned on held-out data.}
\label{tab:hyperparams}
\end{table}

\section{Prompt Details}
\label{sec:prompt}

\begin{paperprompt}{Example: Two-round Flight Recommendation Interaction}
\small

\PromptRole{User}
Help me select the best flights for my trips. I have specific preferences for what I like and dislike in a flight, and these preferences remain the same. You need to figure out my preferences and select the best flights for me. Use your best judgment if you are unsure. Do not say you need more information.

\PromptField{Round 1: Initial Selection}
\textbf{Flight Options:} \\
Flight 1: departure time: 02:00 PM, duration: 2 hr 30 min, number of stops: 1, price: \$370 \\
Flight 2: departure time: 10:00 PM, duration: 4 hr 24 min, number of stops: 0, price: \$730 \\
Flight 3: departure time: 03:36 PM, duration: 16 hr 6 min, number of stops: 0, price: \$900 \\

\textbf{Model Response:} The best option is Flight 1.

\PromptField{Round 2: Feedback and Adaptation}
\textbf{User Feedback:} Your option Flight 1 is incorrect. I prefer Flight 2. \\
\\
\textbf{New Flight Options:} \\
Flight 1: departure time: 04:00 PM, duration: 18 hr 3 min, number of stops: 2, price: \$280 \\
Flight 2: departure time: 10:48 AM, duration: 6 hr 21 min, number of stops: 1, price: \$370 \\
Flight 3: departure time: 06:48 PM, duration: 10 hr 5 min, number of stops: 1, price: \$810 \\

\PromptField{Final Result}
\textbf{Model Response:} The best option is Flight 2. \\
\textbf{User Status:} Your option Flight 2 is correct.

\end{paperprompt}

\begin{paperprompt}{Example: Hotel Recommendation Task Interaction}
\small

\PromptRole{User}
Help me select the best hotels for my trips. I have specific preferences for what I like and dislike in a hotel, and these preferences remain the same. You need to figure out my preferences and select the best hotels for me. Use your best judgment if you are unsure. Do not say you need more information.

\PromptField{Round 1: Hotel Selection}
\textbf{Hotel Options:} \\
Hotel 1: distance to downtown: 4 miles, price: \$550, rating: 3 stars, amenities: free parking and free breakfast \\
Hotel 2: distance to downtown: 3 miles, price: \$820, rating: 2 stars, amenities: free parking, free breakfast, and pool \\
Hotel 3: distance to downtown: 2.3 miles, price: \$370, rating: 1 stars, amenities: free parking \\

\PromptField{Model Response}
The best option is Hotel 3.

\PromptField{User Feedback}
Your option Hotel 3 is incorrect. I prefer Hotel 2.

\end{paperprompt}

\begin{paperprompt}{Example: Web Shopping Task Interaction}
\small

\PromptRole{User}
Help me select the best product. I have specific preferences for what I like and dislike in a product, and these preferences remain the same. You need to figure out my preferences and select the best products for me. Use your best judgment if you are unsure. Do not say you need more information.

\PromptField{Product Options}
\textbf{Product 1:} \\
\textit{Title:} Chic D Independence Day Table Runner 72 Inches Long, Gnome Cotton Linen Spring Table Cloth Runners... \\
\textit{Description:} 14x72inch Dining Table Runner Size; High Quality Cotton Linen; Perfect for holidays, catering, BBQ's, etc. \\
\textit{Color:} black white \quad \textit{Size:} 13x108inch \\
\\
\textbf{Product 2:} \\
\textit{Title:} Ambesonne Orange Mandala Coffee Table, Pastel Colored Flourishes and Dark Toned Details... \\
\textit{Description:} 24" Long x 18" Wide x 15" High; High Quality Beech Wooden Frame and Acrylic Glass Table Top; Easy to assembly. \\
\textit{Color:} blue purple \quad \textit{Size:} large \\
\\
\textbf{Product 3:} \\
\textit{Title:} White Round Dining Table and 4 Chairs, Mid-Century Modern Coffee Table Round Kitchen Table... \\
\textit{Description:} Table size 35.4*35.4*29.5 inch; Clear glass top with solid wood metal legs; Chairs made of velvet. \\
\textit{Size:} round table with wood legs

\PromptField{Model Response}
The best option is Product 3.

\PromptField{User Feedback}
Your option Product 3 is incorrect. I prefer Product 2.

\end{paperprompt}

\section{More results}

\textbf{Flight recommendation.} \Cref{fig:flight_main} shows held-out accuracy on the flight task across all models and baselines. \textsc{AdaptFuse} achieves the highest accuracy on all three base models, outperforming direct prompting, CoT, and self-consistency by substantial margins, and surpassing fine-tuned Bayesian Teaching models despite requiring no weight updates. \textsc{AdaptFuse}'s LLM integration provides a consistent improvement.

\begin{figure*}[]
\centering
\includegraphics[width=\linewidth]{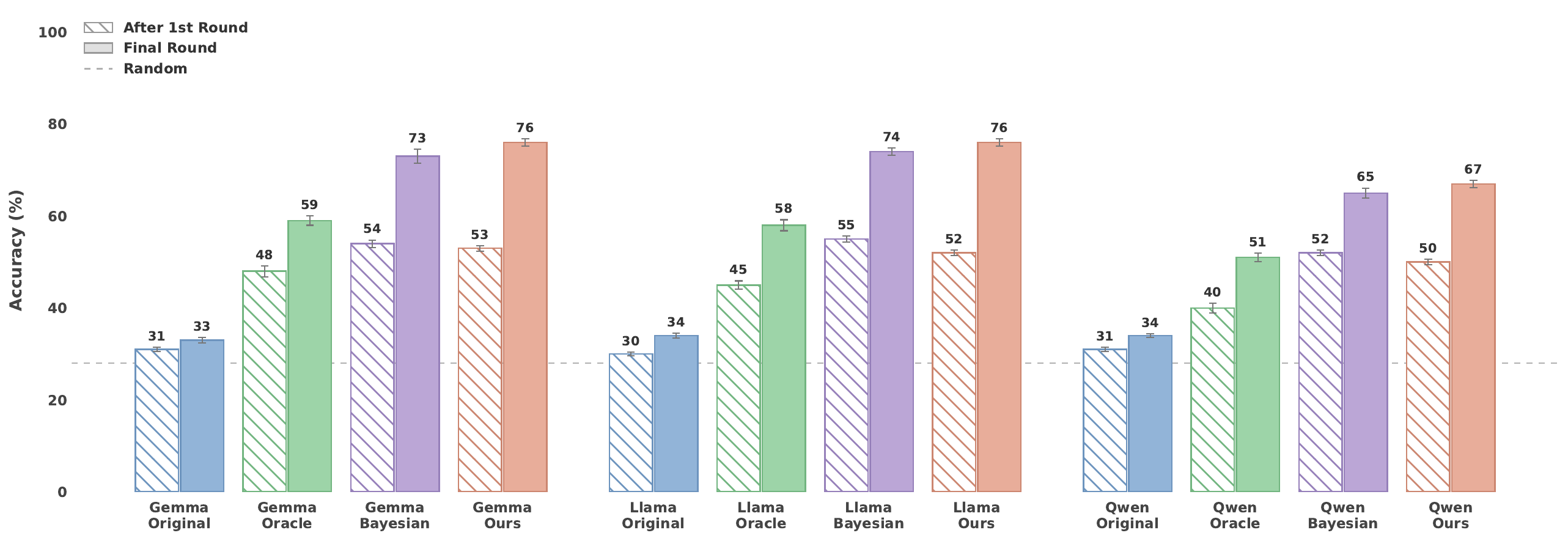}
\caption{\textbf{Interactive preference inference on flight recommendation task.} We show accuracy after the first and final (fifth) rounds across different assistants, including original LLMs, models fine-tuned with the Bayesian Assistant, and models fine-tuned with an oracle that provides correct answers (models provided by the~\cite{qiu2026bayesian}). Both fine-tuning approaches improve performance, and our method achieves the best overall results. Error bars denote the standard error over three random seeds. }
\label{fig:flight_main}
\end{figure*}

\section{Ablation Study Details}
\label{app:ablation}

All ablation experiments are conducted on the flight recommendation task
using Gemma~2 9B.
Results are summarized in \Cref{tab:ablation}; we provide detailed
descriptions and analysis below.

The symbolic belief tracking module's contribution is demonstrated by
the main results (\Cref{tab:main_results_flight}): Self-consistency
uses the same $N{=}5$ LLM calls but lacks a symbolic posterior,
reaching only 44.7\% versus 76.2\% for \textsc{AdaptFuse}: a
31.5-point gap that isolates the value of exact Bayesian belief
tracking.
The ablation therefore focuses on the remaining design choices.

\subsection{Fusion Mechanism}
\label{app:ablation_fusion}

\paragraph{Fixed fusion ($\lambda = 0.5$).}
This variant replaces the entropy-adaptive weighting
(\Cref{eq:adaptive_weights}) with a fixed equal-weight blend:
$\pi^*(i) \propto 0.5 \cdot \pi^{\mathrm{llm}}_i
+ 0.5 \cdot \pi^{\mathrm{sym}}_i$.
The first-round accuracy (52.4\%) is close to the full system (53.1\%),
which is expected: in early rounds the symbolic posterior is still
diffuse, so the adaptive weights are themselves close to 50/50.
The gap widens in later rounds (72.7\% vs.\ 76.2\%), because fixed
fusion continues to give the LLM half the total weight even after the
symbolic posterior has concentrated and the LLM signal is primarily
adding noise.

\subsection{LLM Aggregation Strategy}
\label{app:ablation_aggregation}

\paragraph{Majority vote.}
Simple majority vote over $N=5$ samples replaces Dirichlet aggregation;
confidence scores $c_s$ are ignored and unchosen options receive zero
mass.
The 3.1-point drop (73.1\% vs.\ 76.2\%) arises because majority vote
discards confidence information and produces a spikier distribution that
interacts poorly with entropy-adaptive weighting (a spiky but noisy LLM
signal can receive unduly high fusion weight).

\paragraph{Dirichlet without EMA ($m = 0$).}
Each round's LLM distribution is computed solely from the $N=5$ samples
collected at that round, with no temporal smoothing.
The 2.2-point drop (74.0\% vs.\ 76.2\%) shows that momentum smoothing
reduces per-round variance and stabilizes the fusion weights, with the
effect most pronounced in middle rounds where the system still relies
partially on the LLM signal.

\end{document}